\newcommand{\EQ}{\begin{eqnarray}}
\newcommand{\EN}{\end{eqnarray}}
\newtheorem{assumption}{Assumption}
\newcommand{\floor}[1]{\lfloor #1 \rfloor}
\newtheorem{thm}{Theorem}
\newtheorem{corollary}{Corollary}
\begin{document}
\title{Accumulated Decoupled Learning: Mitigating Gradient Staleness in Inter-Layer Model Parallelization}

\author{
	\IEEEauthorblockN{Huiping Zhuang, 
		Zhiping Lin, Kar-Ann Toh
	}
}
\maketitle

\begin{abstract}
Decoupled learning is a branch of model parallelism which parallelizes the training of a network by splitting it depth-wise into multiple modules. Techniques from decoupled learning usually lead to stale gradient effect because of their asynchronous implementation, thereby causing performance degradation. In this paper, we propose an accumulated decoupled learning (ADL) which incorporates the gradient accumulation technique to mitigate the stale gradient effect. We give both theoretical and empirical evidences regarding how the gradient staleness can be reduced. We prove that the proposed method can converge to critical points, i.e., the gradients converge to 0, in spite of its asynchronous nature. Empirical validation is provided by training deep convolutional neural networks to perform classification tasks on CIFAR-10 and ImageNet datasets. The ADL is shown to outperform several state-of-the-arts in the classification tasks, and is the fastest among the compared methods.
\end{abstract}

\section{Introduction}
Deep neural networks (DNN), including convolutional neural network (CNN) \cite{lecun1998gradient} and recurrent neural network (RNN) \cite{hochreiter1997long}, have gained significant achievements in a variety of complex tasks. Unlike traditional machine learning techniques, DNNs tend to perform increasingly well given deeper and wider structures \cite{zagoruyko2016wide,huang2017densely,gastaldi2017shake}. However, such performance improvement can be costly as it needs a much longer training time. \textit{Data parallelism} \cite{sergeev2018horovod} and \textit{model parallelism} \cite{jia2018beyond} are two common solutions to reducing the consumed time through parallelizing the computation during network training. 

Data parallelism employs multiple workers with each worker handling a replica of the entire network for processing a subset of the training data. This type of parallelization has been well explored, which thrives in networks with high computation-communication ratio such as ResNet-like structures \cite{he2016deep,hu2018squeeze,xie2017aggregated}. Model parallelism, on the other hand, splits the network into several parts with each part handled by one specific worker. Such parallelism spawns various forms. For instance, model parallelism can be adopted in CNNs by parallelizing the convolution operations \cite{yadan2013multi,alex2012imagenet}. Another relatively new form of model parallelism is \textit{decoupled learning} \cite{jaderberg2017decoupled}. This technique partitions a network in a much simpler way by splitting it depth-wise into multiple modules---with each module containing a stack of layers---to facilitate inter-layer module-wise parallelization. Unlike other model parallelism counterparts that usually demand an extensive alteration for realization, the simple depth-wise partition of the decoupled learning encourages a straightforward implementation on various platforms with minimum effort. Such property is worth further exploration.

The decoupled learning is achieved through bypassing the need for a global backpropagation (BP) which has been a standard practice for training networks due to DNN's highly non-convex nature. Specifically, we have to address the lockings \cite{jaderberg2017decoupled} (i.e., the forward, backward, and update lockings) inherited from the BP procedure. These lockings prohibit the network modules from behaving  asynchronously, and lead to inefficiency as the majority of a network is kept idle during training. There have been various attempts to achieve decoupled learning by removing one or more of these lockings. These attempts can be categorized into two groups: the local error learning (LEL) based methods, and delayed gradient (DG) based methods. 

The LEL-based methods build auxiliary networks to generate local error gradients. They sever the gradient flow between the adjacent modules, thereby avoiding the global BP. The difficulty behind methods in this group lies in the design of the auxiliary networks, which appear to be network-specific as well as task-specific \cite{belilovsky2018greedy}. In general, the LEL-based methods give worse performance compared with their BP counterparts without heavy-weight auxiliary networks that would need much longer training time.

The DG-based methods attain decoupled learning by updating the network modules with delayed gradients (or known as ``older'' gradients). These methods begin at unlocking the backward pass \cite{huo2018decoupled} of BP, and are advanced to be lock-free \cite{zhuang2019fully}. Decoupling the learning with DGs is more propitious than the LEL-based methods as it usually gives comparable performance with the BP baselines. The current development of the DG-based methods is constrained by its split size (i.e., the number of modules a network can be split into). To the best of our knowledge, the maximum split size reported in the literature is only 4 modules. Such a limited capacity  is mainly caused by the \textit{stale gradient effect} \cite{zheng2017asynchronous} (also known as \textit{gradient staleness}) that becomes more serious with larger split size. This effect could lead performance drop \cite{zheng2017asynchronous} or even divergence \cite{xu2020acceleration}. To increase the split capacity, the key is to reduce the gradient staleness. In this paper, we propose an accumulated decoupled learning (ADL), which incorporates the gradient accumulation (GA) technique in the split modules to mitigate the stale gradient effect. The contributions of this work include:
\begin{itemize}
	\item Proposal of a new model parallelism technique, achieved by addressing the locking problems in BP.
	\item Incorporation of GA technique into the decoupled learning, which is shown theoretically and empirically to reduce the delayed gradient effect.
	\item Convergence analysis showing that our method can converge to critical points, i.e., the gradients have a lower bound that converges to $0$.
	\item Experiments that include CIFAR-10 and ImageNet classification tasks. We show that the proposed method gives comparable or better classification as well as acceleration performance. In particular, the proposed method can train networks with a split size up to 10, which is significantly larger than the maximum 4 in the previous arts.
\end{itemize}

\section{Related Works}

\subsection{Local Error Learning Based Methods}
The key feature of LEL-based methods is the design of auxiliary networks. The decoupled neural interface (DNI) \cite{jaderberg2017decoupled} adopts a local network that generates synthetic error gradients to achieve decoupled learning. The DNI gives a lock-free training of DNNs but its performance has been shown to degrade quickly or even diverge in training deeper networks \cite{huo2018decoupled}. A local classifier \cite{mostafa2018deep} is adopted to generate local gradients, but it performs constantly worse than a standard BP. A method called pred-sim \cite{nokland2019training} incorporating a cross-entropy loss and a similarity measure successfully trains several VGG networks with comparable performance to the BP baselines. However, the pred-sim method has not been verified in deeper networks. The decoupled greedy learning (DGL) \cite{belilovsky2019decoupled} achieves the decouple learning through designing a light-weight auxiliary network. In general, these LEL-based methods involve a sophisticated auxiliary design, which adds further burden to the tediousness of hyperparameter tuning.

\subsection{Delayed Gradient Based Methods}
The DG-based methods attain decoupled learning by updating the network modules with delayed gradients. A decoupled parallel BP with delayed gradients (DDG) \cite{huo2018decoupled} addresses the backward locking, and shows comparable classification performance to the BP baselines on several ResNet structures. Since the DDG only unlocks the backward pass, the acceleration gained by model parallelism is relatively limited (e.g., $\approx$2$\times$ speedup with 4 GPUs). In \cite{huo2018training}, another backward-unlocking technique, the feature replay (FR), is introduced, which slightly outperforms the DDG. The fully decoupled method with delayed gradients (FDG) \cite{zhuang2019fully} further addresses the forward and the update lockings, achieving a lock-free decoupled learning. Recently, a technique called DSP \cite{xu2020acceleration} has also attained a lock-free decoupled learning. However, these prior arts using DGs inevitably suffer from the stale gradient effect, which becomes more apparent as the spit size grows.

\subsection{Asynchronous Stochastic Gradient Descent}
The asynchronous stochastic gradient descent (ASGD) based methods \cite{dean2012large,lian2015asynchronous} also adopt DGs to facilitate asynchronous distributed learning. They belong to the area of data parallelism since each worker handles the calculation of gradients based on the whole network. By involving DGs, likewise the ASGD-based methods suffer from the stale gradient effect. In \cite{zheng2017asynchronous}, a gradient compensation is made to deal with this effect. This leads to certain improvement compared with the traditional methods, though the stale gradient effect is still quite prominent when the number of workers is large.
\section{Preliminaries}
Here, we revisit some background knowledge for training a feedforward neural network, including the GA technique adopted in our proposed method. During this revisit, the BP lockings \cite{jaderberg2017decoupled} as well as the stale gradient effect are also explained. 

\subsection{Backpropagation and Lockings}
Assume that we need to train an $\mathcal{L}$-layer network. The $l^{\text{th}}$ ($1\le l\le \mathcal{L}$) layer produces an activation $\bm{z}_{l} = F_{l}(\bm{z}_{l-1}; \bm{\theta}_l)$ by taking $\bm{z}_{l-1}$ as its input, where $F_{l}$ is an activation function and  $\bm{\theta}_{l}\in \mathbb{R}^{n_l}$ is weight vector in layer $l$. The sequential generation of the activations results in the \textit{forward locking}  since $\bm{z}_{l}$ depends on its previous layers. Let $\bm{\theta} = [\bm{\theta}_{1}^{T}, \bm{\theta}_{2}^{T}, ..., \bm{\theta}_{\mathcal{L}}^{T}]^{T} \in \mathbb{R}^{\Sigma_{i=1}^{\mathcal{L}}n_i}$ denote the parameter vector of the entire network. Assume $f$ is a loss function. Training the feedforward network can then be formulated as
\begin{align}\label{eq_optimization}
\underset{\bm{\theta}}{\text{minimize}} \quad f_{\bm{x}}(\bm{\theta})
\end{align}
where $\bm{x}$ represents the entire input-label information (or the entire dataset). In the rest of this paper, we shall use $f(\bm{\theta})$ to represent $f_{\bm{x}}(\bm{\theta})$ for convenience.

The gradient descent algorithm is often used to solve \eqref{eq_optimization} by updating the parameter $\bm{\theta}$ iteratively as follows:
\begin{align}\label{eq_gd_batch}
\bm{\theta}^{t+1} = \bm{\theta}^{t} - \gamma_{t}\bm{\bar g}_{\theta}^{t}
\end{align}
or equivalently,
\begin{align}\label{eq_gd_update}
\bm{\theta}_{l}^{t+1} = \bm{\theta}_{l}^{t} - \gamma_{t}\bm{\bar g}_{\bm{\theta}_l}^{t}, \ l=1,...,\mathcal{L}
\end{align}
where $\gamma_{t}$ is the learning rate. Index $t$ here usually implies the \textit{batch index}, with $\bm{\bar g}_{\bm{\theta}_l}^{t}$ indicating the gradient obtained w.r.t. data batch $t$. Let $\bm{\bar g}_{\theta}^{t}= [(\bm{\bar g}_{\bm{\theta}_1}^{t})^{T}, (\bm{\bar g}_{\bm{\theta}_2}^{t})^{T}, ..., (\bm{\bar g}_{\bm{\theta}_\mathcal{L}}^{t})^{T}]^{T} \in \mathbb{R}^{\Sigma_{i=1}^{\mathcal{L}}n_i}$, which is obtained by
\begin{align}\label{eq_g_batch}
\bm{\bar g}_{\bm{\theta}_{l}}^{t} = \frac{\partial f(\bm{\theta}^{t})}{\partial\bm{\theta}_{l}^{t}}.
\end{align}  
If the dataset is large, the stochastic gradient descent (SGD) is often used as an alternative: 
\begin{align}\label{eq_sgd_g}
\bm{g}_{\bm{\theta}_{l}}^{t} = \frac{\partial f_{\bm{x}_{t}}(\bm{\theta}^{t})}{\partial\bm{\theta}_{l}^{t}}
\end{align}
where $\bm{x}_{t}$ is the $t^{\text{th}}$ mini-batch drawn from the dataset $\bm{x}$. We remove the bar ``$\ \bar{} \ $'' on $\bm{g}$ to tell the difference from \eqref{eq_g_batch}. Accordingly, the network weights can be updated through
\begin{align}\label{eq_sgd_update}
\bm{\theta}_{l}^{t+1} = \bm{\theta}_{l}^{t} - \gamma_{t}\bm{g}_{\bm{\theta}_{l}}^{t}, \ l=1,...,\mathcal{L}.
\end{align}  
Assume that each sample is randomly drawn with a uniform distribution. Then the gradient is unbiased:
\begin{align}\label{eq_expectation}
\mathbb{E}_{\bm{x}}\{\bm{g}_{\bm{\theta}_l}^{t}\} = \bm{\bar g}_{\bm{\theta}_l}^{t}
\end{align}
where the expectation $\mathbb{E}_{\bm{x}}$ is taken w.r.t. the random variable that draws $\bm{x}_{t}$ from the dataset.

To obtain the gradient vectors, the BP technique is used. We can calculate the gradients at layer $l$ using the gradients back-propagated from layers $j$ and $i$ ($l<j<i$) as follows:
\begin{align}\label{eq_activation_chain1}
\bm{g}_{\bm{\theta}_l}^{t} =\frac{\partial f_{\bm{x}_{t}}(\bm{\theta}^{t})}{\partial\bm{\theta}_{l}^{t}} = \frac{\partial \bm{z}_{j}^{t}}{\partial \bm{\theta}_{l}^{t}}\frac{\partial f_{\bm{x}_{t}}(\bm{\theta}^{t})}{\partial\bm{z}_{j}^{t}} = \frac{\partial \bm{z}_{j}^{t}}{\partial \bm{\theta}_{l}^{t}}\bm{g}_{\bm{z}_j}^{t}
\end{align}
where 
\begin{align}\label{eq_activation_chain2}
\bm{g}_{\bm{z}_j}^{t} = \frac{\partial f_{\bm{x}_{t}}(\bm{\theta}^{t})}{\partial\bm{z}_{j}^{t}} = \frac{\partial \bm{z}_{i}^{t}}{\partial\bm{z}_{j}^{t}}\frac{\partial f_{\bm{x}_{t}}(\bm{\theta}^{t})}{\partial\bm{z}_{i}^{t}} = \frac{\partial \bm{z}_{i}^{t}}{\partial\bm{z}_{j}^{t}}\bm{g}_{\bm{z}_i}^{t}.
\end{align}
Here we introduce $\bm{g}_{\bm{z}_j}^{t}$---the gradient vector w.r.t. activation $\bm{z}_j$---because it travels trough modules for communication in our ADL. Formulas \eqref{eq_activation_chain1} and \eqref{eq_activation_chain2} indicate that $\bm{g}_{\bm{\theta}_l}^{t}$ is obtained based on $\bm{g}_{\bm{z}_j}^{t}$ and $\bm{g}_{\bm{z}_i}^{t}$. That is, the gradient is not accessible before the forward pass is conducted and all the dependent gradients are obtained, which is known as the \textit{backward locking}. On the other hand, we cannot update the weights before every layers finishes its forward pass, which is recognized as the \textit{update locking}.

\subsection{Learning with Gradient Accumulation (GA)}
The GA technique has frequently been used to increase the mini-batch size for training networks on devices with a relatively limited memory setting. The gradients obtained based on several mini-batches are accumulated before they are finally applied to update the network.

To describe the training development involving the GA technique, we introduce an \textit{update index} $s$, and a \textit{wrapped batch index} $U_{s}$ w.r.t. the original batch index $t$. We use the update index $s$ to indicate the $s^{\text{th}}$ parameter update of the network. It is connected to $U_{s}$ in the way of $U_{s}=Ms$ given $M$ GA steps. Due to the GA technique, the network parameters remain unchanged for $M$ steps, i.e., $\bm{\theta}_{l}^{U_{s}}=$$\bm{\theta}_{l}^{U_{s}+1}=\dots=$$\bm{\theta}_{l}^{U_{s}+M-1}$. Inversely, we can tell the update index from a batch index $t$ by
\begin{align}\label{eq_t_to_s}
s=\floor{{t}/{M}}
\end{align}
where $\floor{x} = \text{max}\{n\in\mathbb{Z}|n \le x\}$ is the \textit{floor} operator. That is, when the network is processing the $t^{\text{th}}$ mini-batch of data, the network has been updated for $s$ times based on \eqref{eq_t_to_s}.

Assume that the gradients w.r.t. batch indexes $t = U_{s},U_{s}+1,\dots,U_{s}+M-1$ are accumulated. Accordingly, these gradients are obtained through
\begin{align}\label{eq_ac_g}
\bm{g}_{\bm{\theta}_{l}}^{t} = \frac{\partial f_{\bm{x}_{t}}(\bm{\theta}^{t})}{\partial\bm{\theta}_{l}^{U_{s}}}
\end{align}
where parameter $\bm{\theta}_{l}^{U_{s}}=\bm{\theta}_{l}^{U_{\floor{t/M}}}$ is adopted compared with \eqref{eq_sgd_g} to emphasize that the gradients w.r.t. to these data batches are obtained based on the same parameter. Using the GA technique, the weights are updated as follows:
\begin{align}\label{eq_bp_ac}
\bm{\theta}_{l}^{U_{s+1}} = \bm{\theta}_{l}^{U_{s}} - \gamma_{s}({1}/{M}){\textstyle\sum}_{j=0}^{M-1}\bm{g}_{\bm{\theta}_{l}}^{U_{s}+j}.
\end{align}

\subsection{Stale Gradient Effect}
Normally the network is updated with gradients obtained w.r.t. the current parameters. However, there are certain scenarios where the network has to update its parameters with gradients calculated based on ``older" parameters. This is called  the stale gradient effect or gradient staleness as the gradients are not up-to-date, and are therefore less accurate.

We define the \textit{level of staleness} (LoS) as the update index difference between the current parameter and the parameter used to calculate the stale gradient. That is, assume that a network is updated through
\begin{align}\label{eq_sgd_stale}
\bm{\theta}_{l}^{t+1} = \bm{\theta}_{l}^{t} - \gamma_{t}\bm{g}_{\bm{\theta}_l}^{t-d}, \ l=1,...,\mathcal{L}
\end{align}  
where $\bm{g}_{\bm{\theta}_l}^{t-d}= {\partial f_{\bm{x}_{t-d}}(\bm{\theta}^{t-d})}/{\partial\bm{\theta}_{l}^{t-d}}$. If the GA step is $M$, we could calculate the LoS through
\begin{align}\label{eq_LoS}
\mathrm{LoS} = \floor{t/M} - \floor{(t-d)/M}
\end{align}
indicating the current parameter is $\bm{\theta}_l^{U_{\floor{t/M}}}$ while the parameter used to calculate gradient $\bm{g}_{\bm{\theta}_l}^{t-d}$ is $\bm{\theta}_l^{U_{\floor{(t-d)/M}}}$.

\begin{figure*}
	\centering
 	\includegraphics[width=0.9\linewidth]{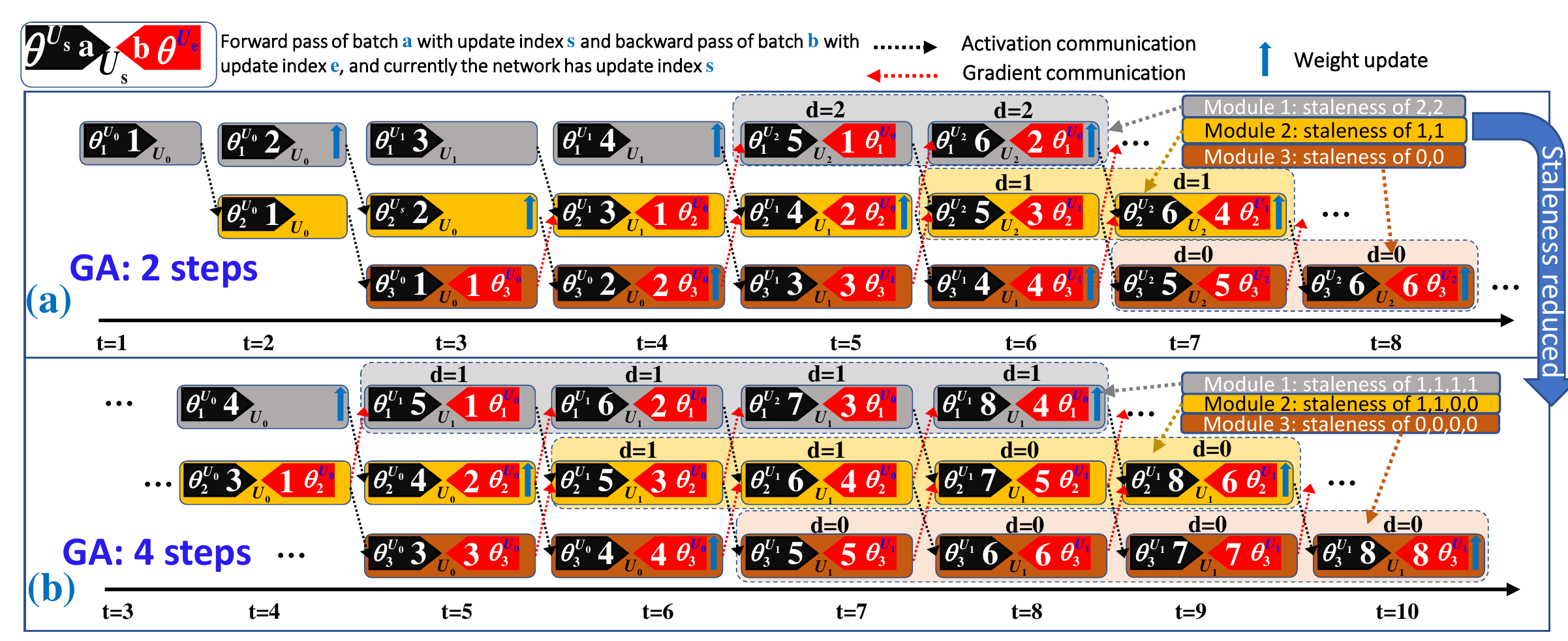}
	\caption{Training a 3-layer network by ADL with $K=3$ and GA steps of (a) $M=2$ and (b) $M=4$. Note that there is a batch index difference of $2(K-k)$ between the forward and backward pass. Gradient staleness is reduced with larger $M$.}
	\label{fig:macmain}
\end{figure*}
\section{The Proposed Method}
In this section, we show the algorithmic details of the proposed ADL, which include an asynchronous pipelining procedure to achieve model parallelism, and a GA technique to mitigate the stale gradient effect. In particular, we explicitly show how the GA could reduce the staleness.

Prior to our development, the network is split depth-wise into $K$ modules with a stack of layers in each module. That is, we split the set of the layer indices \{$1,\dots,\mathcal{L}$\} into \{$q(1), q(2), \dots, q(K)$\} where $q(k) = \{m_k, m_{k}+1,...,m_{k+1}-1\}$ denotes the layer indices in module $k$. This leads to possible notation changes as follows:
\begin{align*}
\resizebox{0.035\linewidth}{!}{$\bm{\theta}^{t}$} =& \resizebox{0.37\linewidth}{!}{$[(\bm{\theta}_{q(1)}^{t})^{T},..., (\bm{\theta}_{q(K)}^{t})^{T}]^{T}$}, \resizebox{0.5\linewidth}{!}{$\bm{\theta}_{q(k)}^{t} =[(\bm{\theta}_{m_k}^{t})^{T},..., (\bm{\theta}_{m_{k+1}-1}^{t})^{T}]^{T}$}\\
\resizebox{0.035\linewidth}{!}{$\bm{g}_{\bm{\theta}}^{t}$}=&\resizebox{0.37\linewidth}{!}{$  [(\bm{g}_{\bm{\theta}_{q(1)}}^{t})^{T},..., (\bm{g}_{\bm{\theta}_{q(K)}}^{t})^{T}]^{T}$}, \resizebox{0.5\linewidth}{!}{$\bm{g}_{\bm{\theta}_{q(k)}}^{t} = [(\bm{g}_{\bm{\theta}_{m_k}}^{t})^{T},..., (\bm{g}_{\bm{\theta}_{m_{k+1}-1}}^{t})^{T}]^{T}$}\\
\resizebox{0.035\linewidth}{!}{$\bm{\bar g}_{\theta}^{t}$} =& \resizebox{0.37\linewidth}{!}{$[(\bm{\bar g}_{\bm{\theta}_{q(1)}}^{t})^{T},..., (\bm{\bar g}_{\bm{\theta}_{q(K)}}^{t})^{T}]^{T}$}, \resizebox{0.5\linewidth}{!}{$\bm{\bar g}_{\bm{\theta}_{q(k)}}^{t} = [(\bm{\bar g}_{\bm{\theta}_{m_k}}^{t})^{T},..., (\bm{\bar  g}_{\bm{\theta}_{m_{k+1}-1}}^{t})^{T}]^{T}$}.
\end{align*} 
\subsection{Accumulated Decoupled Learning (ADL)}
We depict the proposed ADL with an example of training a 3-layer network with a split size $K=3$ in Fig. \ref{fig:macmain}(a) ($M=2$) and \ref{fig:macmain}(b) ($M=4$) respectively. As illustrated in the figures, at every iteration, each module runs a forward and a backward pass. The forward pass is executed with a module input that comes from the output of the lower module at the previous instance. The backward pass calculates the gradients by resuming the BP using gradients inherited from the upper module based on the ``older" data batches.  Note that all the split modules can be run in parallel due to asynchronism by processing data from different batches. Next, each module accumulates gradients for $M$ steps before the gradients are applied to update the network weights.

Assume that the weights of module $k$ ($k=1,\dots,K$) are  at update index $s$ with $\bm{\theta}_{q(k)}^{U_{s}}$. We detail the learning procedures in module $k$ to conduct update $s+1$ as follows.

\subsubsection{Forward Pass}
Module $k$ conducts the forward passes using data batches with indexes $U_{s}, U_{s}+1,$$\dots,$$U_{s}+M-1$. Let $j=0,1,\dots,M-1$.  In detail, we feed the module input $\bm{z}_{m_{k}-1}^{U_{s}+j}$ received from module\footnote{For $k=1$ the module input is the training data.} $k-1$ to generate activations in each layer, which are obtained w.r.t. the same parameter $\bm{\theta}_{q(k)}^{U_{s}}$. Next, we obtain the activation $\bm{z}_{m_{k+1}-1}^{U_{s}+j}$ at the end of this module, and send this activation to module $k+1$ (if any).

\subsubsection{Backward Pass}
During the backward pass, module $k$ resumes BP locally using the gradient\footnote{For $k=K$ the gradient is generated by the loss function.} $\bm{g}_{z_{m_{k+1}-1}}^{U_{s}+j-2(K-k)}$ received from module $k+1$. Note that the superscript $U_{s}+j-2(K-k)$ indicates that there are $2(K-k)$ steps of batch index delay w.r.t. the forward pass (see Fig. \ref{fig:macmain} for illustration). Accordingly, we calculate the gradients in each layer ($m_{k} \le l \le m_{k+1}-1$) within this module as follows:
\begin{align}\label{eq_adl_gra}
\bm{\hat g}_{\bm{\theta}_{l}}^{U_{s}+j} = \frac{\partial \bm{z}_{m_{k+1}-1}^{U_{s}+j-2(K-k)}}{\partial \bm{\theta}_{l}^{U_{\floor{(U_{s}+j-2(K-k))/M}}}}\bm{g}_{\bm{z}_{m_{k+1}-1}}^{U_{s}+j-2(K-k)}.
\end{align}
Note that \eqref{eq_adl_gra} is obtained w.r.t. $\bm{\theta}_{l}^{U_{\floor{(U_{s}+j-2(K-k))/M}}}$ with update index $\floor{(U_{s}+j-2(K-k))/M}$ instead of $s$. This is because the gradient is calculated based on the ``older" data batches, which can tell their corresponding update indexes from \eqref{eq_t_to_s}. At the end of the local BP, gradient 
$\bm{g}_{z_{m_{k}-1}}^{U_{s}+j-2(K-k)}$ w.r.t. the module input $z_{m_{k}-1}^{U_{s}+j-2(K-k)}$ is generated, which is then sent to module $k-1$ (if any).
\subsubsection{Update with Gradient Accumulation}
After obtaining the gradients using \eqref{eq_adl_gra}, the module is not updated immediately. Instead, we accumulate these gradients for $M$ steps before they are applied to update the module as follows:
\begin{align}\label{eq_adl_update}
\bm{\theta}_{l}^{U_{s+1}} &= \bm{\theta}_{l}^{U_{s}} - \gamma_{s}({1}/{M}){\textstyle\sum}_{j=0}^{M-1}\bm{\hat g}_{\bm{\theta}_{l}}^{U_{s}+j}.
\end{align}
We summarize the proposed ADL in Algorithm \ref{algo_adl}

Note that the above ADL is a lock-free decoupled technique. Firstly, the global BP is cast into local BPs in each module running in parallel, which removes the backward locking. Secondly, the split modules adopt training data from different batches so that the forward passes can be executed without waiting for the data from the lower layers. This tackles the forward locking. Finally, each module is updated immediately without waiting for other modules to complete their forward passes, hence addressing the update locking.

\begin{algorithm}
	\SetAlgoLined
	{\small Split the network into $K$ modules};\\
	\For{each iteration}{\small
		\For{$k \leftarrow 1$ \KwTo $K$ {\bf(Parallel)}}{\small
			\textbf{\textit{Forward pass}}: generate the activations with module input (e.g., $\bm{z}_{m_{k}-1}^{U_{s}+j}$), and send the module output (e.g., $\bm{z}_{m_{k+1}-1}^{U_{s}+j}$) to module $k+1$ (if any);\\
			\textbf{\textit{Backward pass}}: using gradient (e.g., $\bm{g}_{z_{m_{k+1}-1}}^{U_{s}+j-2(K-k)}$) received from module $k+1$ to calculate the gradients in each layer following \eqref{eq_adl_gra}, and send the gradient w.r.t. the module input (e.g.,  $\bm{g}_{z_{m_{k}-1}}^{U_{s}+j-2(K-k)}$) to module $k-1$ (if any);\\
			\textbf{\textit{Update}}: \If{accumulated $M$ steps of gradients}{
				Update the module using \eqref{eq_adl_update};
			}
		}
	}
	\caption{The proposed ADL}
	\label{algo_adl}
\end{algorithm}

\subsection{Impact of Gradient Accumulation}
Indicated by \eqref{eq_adl_gra}, the gradients are obtained based on $\bm{\theta}^{U_{\floor{(U_{s}+j-2(K-k))/M}}}$ while the parameter state is $\bm{\theta}^{U_{s}}$. Therefore, according to \eqref{eq_LoS}, the LoS for module $k$ is shown as follows ($j=0,1,\dots,M-1$):
\begin{align}\label{eq_adl_LoS}
d_{k,j} = s - \floor{(U_{s}+j-2(K-k))/M}.
\end{align}
For instance, as shown in Fig. \ref{fig:macmain}(b), with $M=4$ module $2$ updates its parameters using gradients with staleness of $d_{2,0}=1$, $d_{2,1}=1$, $d_{2,2}=0$, and $d_{2,3}=0$. According to \eqref{eq_adl_LoS}, the range of the staleness is
\begin{align}
0\le d_{k,j}\le 2(K-k)
\end{align}
with the minimum $d_{k,j}$ reached for $j-2(K-k)>0$, and the maximum $d_{k,j}=2(K-k)$ obtained at $M=1$ indicating no GA involved. For convenience, we adopt the \textit{averaged LoS}:
\begin{align}\label{eq_avg_LoS}
{\bar d}_{k} = ({1}/{M})\textstyle\sum_{j=0}^{M-1}d_{k,j}
\end{align}
to evaluate the staleness in the proposed ADL. As an example,
Fig. \ref{fig:los} shows the averaged LoS w.r.t. the accumulation step $M$ in the first module with $K=8$, where the gradient staleness is shown to reduce with increasing $M$.
\begin{figure}
	\centering
	\includegraphics[width=0.88\linewidth]{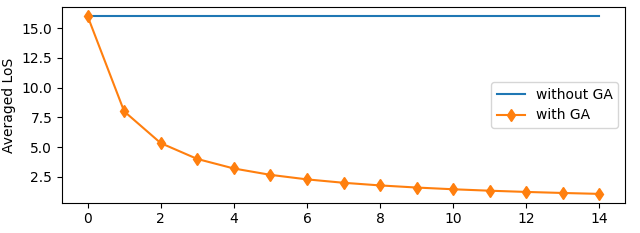}
	\caption{The averaged LoS w.r.t. the accumulation step $M$.}
	\label{fig:los}
\end{figure}

Large $M$ leads to lower gradient staleness, but does not necessarily guarantee improvement in network generalization. This is because larger accumulation step also indicate that the overall mini-batch size is increased, which could weaken the network's ability to generalize \cite{keskar2016large}. As a result, $M$ is an additional hyperparameter that handles the delicate balance between the stale gradient effect and the generalization. In fact, we do not need large $M$ to help the optimization because small $M$ has a rather significant impact on the staleness reduction. As shown in Fig. \ref{fig:los}, with $M=4$ the averaged LoS is already reduced by $75\%$ (from $16$ to $4$). Our experiments later also reveal that small $M$ ($2\le M\le 4$) works rather well.

In addition, according to \eqref{eq_adl_LoS}, we can unpack \eqref{eq_adl_gra} as
\begin{align}\nonumber
&\bm{\hat g}_{\bm{\theta}_{l}^{U_{s}}}^{U_{s}+j} = \frac{\partial \bm{z}_{m_{k+1}-1}^{U_{s}+j-2(K-k)}}{\partial \bm{\theta}_{l}^{U_{s-d_{k,j}}}}\frac{\partial f_{\bm{x}_{U_{s}+j-2(K-k)}}(\bm{\theta}^{U_{s-d_{k,j}}})}{\partial \bm{z}_{m_{k+1}-1}^{U_{s}+j-2(K-k)}}\\\label{eq_adl_fg_unpacked}
&=\frac{\partial f_{\bm{x}_{U_{s}+j-2(K-k)}}(\bm{\theta}^{U_{s-d_{k,j}}})}{\partial \bm{\theta}_{l}^{U_{s-d_{k,j}}}}=\bm{g}_{\bm{\theta}_{l}}^{U_{s}+j-2(K-k)}
\end{align}
and then rewrites \eqref{eq_adl_update} as
\begin{align}\label{eq_adl}
\bm{\theta}_{l}^{U_{s+1}} = \bm{\theta}_{l}^{U_{s}} - \gamma_{s}({1}/{M}){\textstyle\sum}_{j=0}^{M-1}\bm{g}_{\bm{\theta}_{l}}^{U_{s}+j-2(K-k)}.
\end{align}
That is, the proposed ADL accumulates gradients that are $2(K-k)$ steps ``older", while each of these accumulated gradients admits an LoS of $d_{k,j}$ as shown in \eqref{eq_adl_LoS}.

\section{Convergence Analysis}
In this section, we conduct convergence analysis of the proposed method. The analysis shows that the ADL can converge to critical points based on the following assumptions.

\begin{assumption}\label{ass_continous}
	Lipschitz continuity of gradients for loss functions $f(\bm{\theta})$, which means $\exists L\in\mathbb{R}^{+}$ such that:
	\begin{align}
	||\bm{\bar g}_{\bm{\theta}_{l}}^{U_{\alpha}} - \bm{\bar g}_{\bm{\theta}_{l}}^{U_{\beta}}||_2\le L||\bm{\theta}_{l}^{U_{\alpha}} - \bm{\theta}_{l}^{U_{\beta}}||_2
	\end{align}
	where $\left\lVert.\right\lVert_{2}$ is an $l_{2}$-norm operator. This also leads to
	\begin{align*}
	\resizebox{1\linewidth}{!}{$||\bm{\bar g}_{\bm{\theta}_{q(k)}}^{U_{\alpha}} - \bm{\bar g}_{\bm{\theta}_{q(k)}}^{U_{\beta}}||_2\le L||\bm{\theta}_{q(k)}^{U_{\alpha}} - \bm{\theta}_{q(k)}^{U_{\beta}}||_2, \ ||\bm{\bar g}_{\bm{\theta}}^{U_{\alpha}} - \bm{\bar g}_{\bm{\theta}}^{U_{\beta}}||_2\le L||\bm{\theta}^{U_{\alpha}} - \bm{\theta}^{U_{\beta}}||_2$}.
	\end{align*}
\end{assumption}

\begin{assumption}\label{ass_gradient_bound}
	Bounded variance of the stochastic gradient, which means that $\forall s$, $\exists A > 0$ such that:
	\begin{align}
	\resizebox{0.87\linewidth}{!}{$||\bm{g}_{\bm{\theta}_{l}}^{U_{s}}||_2^2\le A, \ \text{which leads to } ||\bm{g}_{\bm{\theta}_{q(k)}}^{U_{s}}||_2^2\le A, \ ||\bm{g}_{\bm{\theta}}^{U_{s}}||_2^2\le A$}.
	\end{align}
\end{assumption}

Assumptions \ref{ass_continous} and \ref{ass_gradient_bound} are commonly made for convergence analysis in neural networks (see \cite{bottou2018optimization,huo2018decoupled}). In particular, these assumptions do not assume convexity of function $f$.

\begin{thm}\label{thm_convergence}
	Let Assumptions \ref{ass_continous} and \ref{ass_gradient_bound} hold. Suppose that the learning rate is non-increasing and $L\gamma_{s}\le1$. The proposed ADL has the following lower bound:
	\begin{align}\label{eq_convergence}
	\resizebox{0.88\linewidth}{!}{$
		\mathbb{E}_{\bm{x}}\{f(\bm{\theta}^{U_{s+1}})\} - f(\bm{\theta}^{U_{s}}) \le - \frac{\gamma_{s}}{2}\lVert\bm{\bar g}_{\bm{\theta}}^{U_{s}}\lVert_2^2+ \gamma_{s}^{2}{AL}(1 + (1/M){\textstyle\sum}_{k=1}^{K}{\bar d}_{k})/M.$}
	\end{align}
\end{thm}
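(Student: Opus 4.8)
The plan is to establish \eqref{eq_convergence} as a single-step descent inequality, following the smoothness-based analysis of SGD but augmented to absorb both the module-wise delays and the $M$-step accumulation. First I would invoke Assumption~\ref{ass_continous}: since $f$ has $L$-Lipschitz gradients, the quadratic upper bound $f(\bm{\theta}^{U_{s+1}}) \le f(\bm{\theta}^{U_{s}}) + \langle \bm{\bar g}_{\bm{\theta}}^{U_{s}}, \bm{\theta}^{U_{s+1}} - \bm{\theta}^{U_{s}}\rangle + \tfrac{L}{2}\lVert \bm{\theta}^{U_{s+1}} - \bm{\theta}^{U_{s}}\rVert_2^2$ holds. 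Substituting the ADL update in its unpacked form \eqref{eq_adl}, the increment $\bm{\theta}^{U_{s+1}} - \bm{\theta}^{U_{s}}$ equals $-\gamma_s$ times the block vector whose $k$-th block is the accumulated, $2(K-k)$-delayed stochastic gradient $(1/M)\sum_{j=0}^{M-1}\bm{g}_{\bm{\theta}_{q(k)}}^{U_{s}+j-2(K-k)}$. This produces one linear (inner-product) term and one quadratic term, which I would treat separately.

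Next I would take the expectation $\mathbb{E}_{\bm{x}}$ and apply the unbiasedness relation \eqref{eq_expectation} together with the identity \eqref{eq_adl_fg_unpacked}: in expectation each stochastic gradient $\bm{g}_{\bm{\theta}_{q(k)}}^{U_{s}+j-2(K-k)}$ is replaced by the full-batch gradient $\bm{\bar g}_{\bm{\theta}_{q(k)}}^{U_{s-d_{k,j}}}$ evaluated at the stale parameter $\bm{\theta}^{U_{s-d_{k,j}}}$, with $d_{k,j}$ the level of staleness from \eqref{eq_adl_LoS}. Because the stale gradients depend on earlier mini-batches, this replacement is done under the appropriate conditioning and then recombined by the tower property. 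For the linear term I would add and subtract the current gradient $\bm{\bar g}_{\bm{\theta}_{q(k)}}^{U_{s}}$ inside each block, splitting it into a clean descent contribution $-\gamma_s\lVert \bm{\bar g}_{\bm{\theta}}^{U_{s}}\rVert_2^2$ and a staleness-error contribution proportional to $\sum_{k}\langle \bm{\bar g}_{\bm{\theta}_{q(k)}}^{U_{s}}, (1/M)\sum_{j}(\bm{\bar g}_{\bm{\theta}_{q(k)}}^{U_{s}} - \bm{\bar g}_{\bm{\theta}_{q(k)}}^{U_{s-d_{k,j}}})\rangle$.

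The heart of the argument is bounding this staleness-error contribution. Here I would use Assumption~\ref{ass_continous} to bound $\lVert \bm{\bar g}_{\bm{\theta}_{q(k)}}^{U_{s}} - \bm{\bar g}_{\bm{\theta}_{q(k)}}^{U_{s-d_{k,j}}}\rVert_2 \le L\lVert \bm{\theta}_{q(k)}^{U_{s}} - \bm{\theta}_{q(k)}^{U_{s-d_{k,j}}}\rVert_2$, then express the parameter drift as a telescoping sum of the $d_{k,j}$ intervening updates and bound each update's norm by a multiple of $\sqrt{A}$ using Assumption~\ref{ass_gradient_bound} together with convexity of the norm (so the accumulated average of $M$ bounded gradients is itself bounded by $\sqrt{A}$). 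Crucially, I would apply Cauchy--Schwarz together with $\lVert\bm{\bar g}_{\bm{\theta}_{q(k)}}^{U_{s}}\rVert_2\le\sqrt{A}$ rather than a squared Young bound, so that the per-$(k,j)$ error stays \emph{linear} in $d_{k,j}$ and carries only a single power of $L$. Averaging over $j$ then turns $\sum_j d_{k,j}$ into $M\bar d_k$ via the definition \eqref{eq_avg_LoS}, so the modules assemble into a term proportional to $\sum_{k}\bar d_k$. The quadratic term is bounded directly by Assumption~\ref{ass_gradient_bound}. The role of the non-increasing learning rate, of $L\gamma_s\le1$, and of the $1/M$ factors produced by the accumulation is to keep the learning-rate products and residual quadratic pieces under control during the bookkeeping, so that half of the descent term is absorbed and the residual collapses to $\gamma_s^{2}AL(1 + (1/M)\sum_{k}\bar d_{k})/M$, yielding the stated $-\tfrac{\gamma_s}{2}\lVert\bm{\bar g}_{\bm{\theta}}^{U_{s}}\rVert_2^2$.

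I expect the main obstacle to be exactly this staleness-error step. The modules carry different delays $2(K-k)$, the stale gradients live at different past iterates $\bm{\theta}^{U_{s-d_{k,j}}}$, and one must track the parameter drift through a telescoping sum of past updates whose learning rates must be reconciled with the current $\gamma_s$ through the monotonicity assumption. The delicate point is choosing Cauchy--Schwarz plus the uniform gradient bound (instead of a squared deviation bound) so that the final dependence on the averaged staleness $\bar d_k$ is linear and single-$L$, matching the right-hand side of \eqref{eq_convergence} rather than a looser quadratic-in-$\bar d_k$ estimate.
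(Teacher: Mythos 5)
Your overall skeleton (smoothness upper bound, unbiasedness under expectation, adding and subtracting the current full gradient, then Lipschitz continuity plus a telescoping sum of past updates controlled by Assumption~\ref{ass_gradient_bound}) is the same as the paper's. The genuine gap is exactly the step you single out as crucial: replacing the squared (Young-type) bounds by Cauchy--Schwarz with $\lVert\bm{\bar g}_{\bm{\theta}_{q(k)}}^{U_s}\rVert_2\le\sqrt A$. Trace the constants. Your staleness error is at most $\gamma_s\sqrt A\cdot\frac{L}{M}\sum_{k}\sum_{j}\lVert\bm{\theta}_{q(k)}^{U_s}-\bm{\theta}_{q(k)}^{U_{s-d_{k,j}}}\rVert_2$, and each drift is at most $\gamma_s\sqrt A\,d_{k,j}$ (at most $d_{k,j}$ intervening updates, each of norm at most $\gamma_\alpha\sqrt A$). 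This yields $\gamma_s^2AL\cdot\frac1M\sum_{k}\sum_{j}d_{k,j}=\gamma_s^2AL\sum_k\bar d_k$: the single $1/M$ your route produces is entirely consumed by the conversion $\sum_j d_{k,j}=M\bar d_k$, so no power of $1/M$ survives. But the stated bound \eqref{eq_convergence} requires $\gamma_s^2AL(1/M^2)\sum_k\bar d_k$ for this term, so you are off by a factor of $M^2$ --- and this $M$-dependence is the entire content of the theorem (it is the quantitative claim that gradient accumulation mitigates staleness). Since your right-hand side is strictly larger than the claimed one, your inequality, though a valid descent estimate, does not prove Theorem~\ref{thm_convergence}. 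The paper gets the extra powers of $1/M$ precisely by staying with squared quantities: it bounds $\tilde P_1=\sum_k\lVert\bm{\bar \mathfrak{g}}_{\bm{\theta}_{q(k)}}^{U_{s}^{'}}-\bm{\bar g}_{\bm{\theta}_{q(k)}}^{U_s}\rVert_2^2$, where both the outer average over $j$ and the inner accumulated-update norm $\lVert\bm{\mathfrak{g}}_{\bm{\theta}_{q(k)}}^{U_{\alpha}^{'}}\rVert_2^2$ appear squared, each $M$-average contributing a Jensen-type $1/M^2$ factor, and the surplus powers of $L\gamma_s$ are absorbed using $L\gamma_s\le1$. Your stated motivation for avoiding the squared route --- fear of a quadratic-in-$\bar d_k$ estimate --- is misplaced: in the paper, linearity in $d_{k,j}$ comes from the telescoping sum having at most $d_{k,j}$ summands, each bounded individually, not from keeping deviations to first power.

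There is a second, independent shortfall: the leading term $\gamma_s^2AL/M$. You propose to bound the smoothness quadratic term ``directly by Assumption~\ref{ass_gradient_bound}'', which gives $\frac{L\gamma_s^2}{2}\sum_k\lVert\bm{\mathfrak{g}}_{\bm{\theta}_{q(k)}}^{U_{s}^{'}}\rVert_2^2\le\frac{L\gamma_s^2}{2}KA$ --- no factor $1/M$, and an unwanted factor $K$. The $1/M$ in the paper comes from a mean--fluctuation decomposition of the stochastic accumulated gradient: writing $\bm{\mathfrak{g}}^{U_{s}^{'}}=(\bm{\mathfrak{g}}^{U_{s}^{'}}-\bm{\bar \mathfrak{g}}^{U_{s}^{'}})+\bm{\bar \mathfrak{g}}^{U_{s}^{'}}$ and using that the $M$ accumulated mini-batch gradients are unbiased and independently drawn, so that the fluctuation's expected squared norm scales like $A/M$, while the mean part is folded into the same staleness quantity $\tilde P_1$ as above. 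Your proposal invokes unbiasedness only to replace stochastic gradients by full gradients in the linear term and never uses this variance-reduction argument, so the $1/M$ factors in \eqref{eq_convergence} are unreachable by the route you describe. To repair the proof you would need to (i) keep the deviation terms squared (Young/absorption into the descent term, as in the paper's handling of $\tilde Q_1$ and $\tilde Q_2$), and (ii) add the variance-reduction step for the accumulated stochastic gradient.
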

\begin{proof}
	See supplementary material A.
\end{proof}

Theorem \ref{thm_convergence} gives an important indication for convergence. If the RHS of \eqref{eq_convergence} is negative, i.e., 
\begin{align*}
	\resizebox{0.88\linewidth}{!}{$\gamma_{s} < \text{min}\left\{{1}/{L}, \ {M}\lVert\bm{\bar g}_{\bm{\theta}}^{U_{s}}\lVert_2^2/({2AL}(1 + (1/M){\textstyle\sum}_{k=1}^{K}{\bar d}_{k}))\right\}$},
\end{align*}
the expected loss $\mathbb{E}_{\bm{x}}\{f(\bm{\theta}^{U_{s+1}})\}$ would decrease. We further give the convergence evidence in the following theorems.

\begin{thm}\label{thm_avg_gra}
	Suppose Assumptions \ref{ass_continous} and \ref{ass_gradient_bound} hold, and the learning rate is non-increasing as well as satisfies $L\gamma_{s}\le1$. Let $\bm{\theta}^{*}$ be the global minimizer and $\mathbb{T}_{S} = {\textstyle\sum}_{s=0}^{S-1}\gamma_{s}$ where $S$ indicates the network will be updated $S$ times. Then
	\begin{align}\nonumber
	&({1}/{\mathbb{T}_{S}}){\textstyle\sum}_{s=0}^{S-1}\gamma_{s}\mathbb{E}\{||\bm{\bar g}_{\bm{\theta}}^{U_{s}}||_2^2\} \le {2(f(\bm{\theta}^{0}) - f(\bm{\theta}^{*}))}/{\mathbb{T}_{S}} \\\label{eq_bound_weighted_avg_gra}
	&+ ({2{AL}(1 + (1/M){\textstyle\sum}_{k=1}^{K}{\bar d}_{k}){\textstyle\sum}_{s=0}^{S-1}\gamma_{s}^{2} })/({{M}\mathbb{T}_{S}}).
	\end{align}
\end{thm}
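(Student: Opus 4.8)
The plan is to treat Theorem~\ref{thm_convergence} as a one-step descent lemma and accumulate it over the update index. First I would invoke the inequality \eqref{eq_convergence} at a generic step $s$ and take the total expectation over all the mini-batch sampling up to that step. Since \eqref{eq_convergence} is stated with the conditional expectation $\mathbb{E}_{\bm{x}}$ given the current parameter state, applying the tower property turns the (otherwise random) term $f(\bm{\theta}^{U_{s}})$ into $\mathbb{E}\{f(\bm{\theta}^{U_{s}})\}$ and yields
\begin{align*}
\mathbb{E}\{f(\bm{\theta}^{U_{s+1}})\} - \mathbb{E}\{f(\bm{\theta}^{U_{s}})\} \le -\tfrac{\gamma_{s}}{2}\mathbb{E}\{\lVert\bm{\bar g}_{\bm{\theta}}^{U_{s}}\lVert_2^2\} + \gamma_{s}^{2}AL(1+(1/M){\textstyle\sum}_{k=1}^{K}{\bar d}_{k})/M .
\end{align*}
Rearranging to isolate the gradient-norm term bounds $\tfrac{\gamma_{s}}{2}\mathbb{E}\{\lVert\bm{\bar g}_{\bm{\theta}}^{U_{s}}\lVert_2^2\}$ by the expected one-step decrease $\mathbb{E}\{f(\bm{\theta}^{U_{s}})\} - \mathbb{E}\{f(\bm{\theta}^{U_{s+1}})\}$ plus the constant error $\gamma_{s}^{2}AL(1+(1/M)\sum_{k}{\bar d}_{k})/M$.

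Next I would sum this over $s=0,\dots,S-1$. The key observation is that the expected function-value differences telescope, collapsing to $f(\bm{\theta}^{U_{0}}) - \mathbb{E}\{f(\bm{\theta}^{U_{S}})\}$, while the gradient terms accumulate into $\tfrac12\sum_{s=0}^{S-1}\gamma_{s}\mathbb{E}\{\lVert\bm{\bar g}_{\bm{\theta}}^{U_{s}}\lVert_2^2\}$. Because the averaged LoS ${\bar d}_{k}$ defined in \eqref{eq_avg_LoS} is independent of the update index $s$ (through \eqref{eq_adl_LoS} it depends only on $k$, $K$, and $M$), the multiplicative constant $AL(1+(1/M)\sum_{k}{\bar d}_{k})/M$ factors cleanly out of the error sum, leaving $\sum_{s=0}^{S-1}\gamma_{s}^{2}$. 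Invoking $\bm{\theta}^{*}$ as the global minimizer gives $\mathbb{E}\{f(\bm{\theta}^{U_{S}})\} \ge f(\bm{\theta}^{*})$, and using $\bm{\theta}^{U_{0}}=\bm{\theta}^{0}$ I get $f(\bm{\theta}^{U_{0}}) - \mathbb{E}\{f(\bm{\theta}^{U_{S}})\} \le f(\bm{\theta}^{0}) - f(\bm{\theta}^{*})$. Multiplying through by $2$ and dividing by $\mathbb{T}_{S} = \sum_{s=0}^{S-1}\gamma_{s}$ then reproduces \eqref{eq_bound_weighted_avg_gra} exactly.

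This argument is essentially routine once Theorem~\ref{thm_convergence} is in hand, so the only point requiring genuine care is the bookkeeping of the expectation. Specifically, I would verify that taking the full expectation is legitimate---that the conditional descent bound holds pointwise over the parameter state and that iterating the expectations does not disturb the telescoping---and that the non-increasing, $L\gamma_{s}\le1$ learning-rate hypotheses are exactly those under which \eqref{eq_convergence} was derived, so that they carry over unchanged at every step. No new estimate on the staleness or on the stochastic gradients is needed beyond what Theorem~\ref{thm_convergence} already supplies.
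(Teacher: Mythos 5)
Your proposal is correct and follows essentially the same route as the paper's own proof: rearrange the one-step bound from Theorem~\ref{thm_convergence}, take full expectation so the bound telescopes over $s=0,\dots,S-1$, bound $\mathbb{E}\{f(\bm{\theta}^{U_{S}})\}$ below by $f(\bm{\theta}^{*})$, and divide by $\mathbb{T}_{S}$. The only (immaterial) difference is ordering---you take the total expectation before rearranging, whereas the paper rearranges first---and your added remark that $\bar{d}_{k}$ is independent of $s$ is a correct justification for factoring the error constant out of the sum, which the paper uses implicitly.
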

\begin{proof}
	See supplementary material B.
\end{proof}
We use \textit{ergodic convergence} as the metric to evaluate the convergence, which is commonly adopted for convergence analysis in non-convex optimization (see \cite{bottou2018optimization,huo2018decoupled,lian2015asynchronous}). The lower bound in Theorem \ref{thm_avg_gra} indicates that, for a randomly selected $\mathfrak{q}$ from $\{0,1,\dots,S-1\}$ with probability $\{\gamma_{\mathfrak{q}}/\mathbb{T}_{S}\}$, $\mathbb{E}\{||\bm{\bar g}_{\bm{\theta}}^{U_{s}}||_2^2\}$ is bounded by the RHS of \eqref{eq_bound_weighted_avg_gra}. More importantly, a larger $M$ leads to a smaller lower bound in \eqref{eq_bound_weighted_avg_gra} because the ${\bar d}_{k}$ decreases, and thus benefits the convergence. Another observation is that larger split size $K$ hinders the convergence as ${\textstyle\sum}_{k=1}^{K}{\bar d}_{k}$ increases. These observations are consistent to our understanding that the GA helps the optimization by mitigating staleness, and splitting the network into more modules is harmful.

\begin{corollary}\label{corollary_lr}
	If $\gamma_{s}$ further satisfies $\lim_{S\to\infty}\mathbb{T}_{S}$$=\infty$ and $\lim_{S\to\infty}\sum_{s=0}^{S-1}\gamma_{s}^{2}<$$\infty$, the RHS of \eqref{eq_bound_weighted_avg_gra} converges to 0.
\end{corollary}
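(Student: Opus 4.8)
The plan is to split the RHS of \eqref{eq_bound_weighted_avg_gra} into its two summands and show that each vanishes as $S\to\infty$. First I would isolate the factors that are fixed independent of $S$: the Lipschitz constant $L$, the variance bound $A$, the accumulation step $M$, the split size $K$, the averaged staleness ${\bar d}_{k}$, and the initial optimality gap $f(\bm{\theta}^{0}) - f(\bm{\theta}^{*})$. Since $\bm{\theta}^{*}$ is the global minimizer, this last quantity is nonnegative and finite. Hence the only $S$-dependent quantities in the bound are $\mathbb{T}_{S}$ and ${\textstyle\sum}_{s=0}^{S-1}\gamma_{s}^{2}$, and the entire argument reduces to controlling these two.

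For the first summand $2(f(\bm{\theta}^{0}) - f(\bm{\theta}^{*}))/\mathbb{T}_{S}$, the numerator is a fixed finite constant, so the hypothesis $\lim_{S\to\infty}\mathbb{T}_{S}=\infty$ immediately forces this term to $0$. For the second summand, I would invoke the assumption $\lim_{S\to\infty}{\textstyle\sum}_{s=0}^{S-1}\gamma_{s}^{2}<\infty$: denoting this finite limit by $C$, and noting that the partial sums are monotone nondecreasing in $S$ (the summands $\gamma_{s}^{2}$ being nonnegative), we have ${\textstyle\sum}_{s=0}^{S-1}\gamma_{s}^{2}\le C$ for every $S$. The numerator of the second term is therefore uniformly bounded by $2AL(1 + (1/M){\textstyle\sum}_{k=1}^{K}{\bar d}_{k})\,C/M$, a finite constant, while the denominator $\mathbb{T}_{S}$ again diverges to $\infty$; dividing a bounded quantity by a divergent one yields $0$.

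Combining the two limits, the full RHS of \eqref{eq_bound_weighted_avg_gra} tends to $0$, which is the claim. There is essentially no technical obstacle: the result is a direct consequence of the two stated conditions on the learning-rate sequence, which are precisely the classical Robbins--Monro step-size conditions (noting $\mathbb{T}_{S}={\textstyle\sum}_{s=0}^{S-1}\gamma_{s}$). The only care required is to verify that every remaining factor is genuinely constant in $S$ and that the partial sums of $\gamma_{s}^{2}$ stay bounded by their convergent limit, after which both terms are of the form (bounded)$/\mathbb{T}_{S}\to 0$.
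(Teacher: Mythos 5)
Your proof is correct and matches the paper's reasoning: the paper states this corollary without an explicit proof precisely because it follows immediately from the two step-size conditions, in exactly the way you argue (first term vanishes since $\mathbb{T}_{S}\to\infty$; second term is a bounded numerator over a divergent denominator). Your added care in noting that $\bar{d}_{k}$ and the other factors are $S$-independent, and that the partial sums of $\gamma_{s}^{2}$ are monotone and hence bounded by their limit, is exactly the right justification.
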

According to Corollary \ref{corollary_lr}, by properly scheduling the learning rate, the lower bound for the expected gradient would converge to $0$, i.e.,  $\lim_{S\to\infty}\mathbb{E}\{||\bm{\bar g}_{\bm{\theta}}^{U_{s}}||_2^2\}=0$. That is, the proposed ADL can converge to critical points. Alternatively, the convergence can be revealed by setting a constant learning rate as indicated in the following theorem.

\begin{thm}\label{thm_constant_lr}
	Let Assumptions \ref{ass_continous} and \ref{ass_gradient_bound} hold. Suppose the learning rate is set as a constant:
	\begin{align*}
	\resizebox{0.68\linewidth}{!}{$
		\gamma = \epsilon\sqrt{M(f(\bm{\theta}^{0}) - f(\bm{\theta}^{*}))/\Big({SAL}(1+ {\textstyle\sum}_{k=1}^{K}{\bar d}_{k})}\Big)$}
	\end{align*}
	where $\epsilon$ is a scaling factor such that $L\gamma\le1$. Let $\bm{\theta}^{*}$ be the global minimizer. Then we have
	\begin{align}\label{eq_constant_lr}
	\resizebox{0.88\linewidth}{!}{$
		\underset{s\in\{0,1,\dots,S-1\}}{\mathrm{min}}\mathbb{E}\{\lVert\bm{\bar g}_{\bm{\theta}}^{U_{s}}\rVert_2^2\}\le \frac{(2 + 2\epsilon^{2})}{\epsilon}\sqrt{AL(f(\bm{\theta}^{0}) - f(\bm{\theta}^{*}))\Big( 1 + (1/M)\textstyle\sum_{k=1}^{K}{\bar d}_{k}\Big)/(MS)},$}
	\end{align}
	where the lower bound converges to 0 when $S\to \infty$.
\end{thm}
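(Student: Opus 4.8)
The plan is to obtain Theorem~\ref{thm_constant_lr} as a direct specialization of Theorem~\ref{thm_avg_gra} to a constant step size, combined with the elementary fact that a minimum never exceeds the average. First I would set $\gamma_{s}=\gamma$ for all $s$ in the bound \eqref{eq_bound_weighted_avg_gra}. This immediately gives $\mathbb{T}_{S}=\textstyle\sum_{s=0}^{S-1}\gamma=S\gamma$ and $\textstyle\sum_{s=0}^{S-1}\gamma_{s}^{2}=S\gamma^{2}$, so the left-hand side of \eqref{eq_bound_weighted_avg_gra} collapses from a $\gamma_{s}$-weighted average into the plain average $(1/S)\textstyle\sum_{s=0}^{S-1}\mathbb{E}\{\lVert\bm{\bar g}_{\bm{\theta}}^{U_{s}}\rVert_2^2\}$, while in the second term on the right the factor $S\gamma^{2}$ cancels one power of $S\gamma$ coming from $\mathbb{T}_{S}$.

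Next I would pass from the average to the minimum via
\begin{align*}
\underset{s\in\{0,\dots,S-1\}}{\min}\mathbb{E}\{\lVert\bm{\bar g}_{\bm{\theta}}^{U_{s}}\rVert_2^2\} \le \frac{1}{S}\textstyle\sum_{s=0}^{S-1}\mathbb{E}\{\lVert\bm{\bar g}_{\bm{\theta}}^{U_{s}}\rVert_2^2\},
\end{align*}
so that after the above simplifications the quantity of interest is controlled by the two-term expression
\begin{align*}
\frac{2(f(\bm{\theta}^{0})-f(\bm{\theta}^{*}))}{S\gamma} + \frac{2AL\big(1+(1/M)\textstyle\sum_{k=1}^{K}{\bar d}_{k}\big)\gamma}{M}.
\end{align*}
The remaining work is to substitute the prescribed $\gamma$ into this sum. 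The chosen $\gamma$ is, up to the scaling $\epsilon$, exactly the minimizer of a function of the form $a/\gamma+b\gamma$, which is precisely the shape of the two terms above; this is why the substitution renders both terms proportional to the same radical $\sqrt{AL(f(\bm{\theta}^{0})-f(\bm{\theta}^{*}))\big(1+(1/M)\textstyle\sum_{k=1}^{K}{\bar d}_{k}\big)/(MS)}$. The first term then contributes a prefactor $2/\epsilon$ and the second a prefactor $2\epsilon$, so their sum is $(2/\epsilon+2\epsilon)=(2+2\epsilon^{2})/\epsilon$ times that radical, which is exactly \eqref{eq_constant_lr}. Finally, the $1/\sqrt{S}$ dependence inside the radical makes the right-hand side tend to $0$ as $S\to\infty$, establishing convergence to a critical point.

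I do not expect a genuine conceptual obstacle here, since the heavy lifting---the descent inequality and the telescoping that yield Theorem~\ref{thm_avg_gra}---is already in place. The only delicate point is algebraic bookkeeping during the substitution: one must check that the staleness factor appearing \emph{inside} $\gamma$ matches the factor $\big(1+(1/M)\textstyle\sum_{k=1}^{K}{\bar d}_{k}\big)$ multiplying $\gamma$ in the second term, so that the two radicals coincide exactly rather than merely up to a constant. I would also record that the constraint $L\gamma\le1$ forces $\epsilon$ to be no larger than its unconstrained optimum, where $(2+2\epsilon^{2})/\epsilon$ attains its minimum value $4$ at $\epsilon=1$; this makes transparent the mild trade-off between respecting the step-size restriction and tightening the bound, and confirms that both larger $M$ (smaller ${\bar d}_{k}$) and larger $S$ sharpen the guarantee.
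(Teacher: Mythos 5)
Your proposal is correct and follows essentially the same route as the paper: the paper likewise specializes the telescoped descent inequality to a constant $\gamma$ (it restarts from the intermediate inequality \eqref{eq_thm_weighted_avg2} in the proof of Theorem \ref{thm_avg_gra}, which is equivalent to plugging $\gamma_s=\gamma$ into \eqref{eq_bound_weighted_avg_gra} as you do), substitutes the prescribed step size so that both terms reduce to the common radical with prefactors $2/\epsilon$ and $2\epsilon$, and finishes with the minimum-versus-average bound. Your ``delicate point'' is well spotted: the algebra only closes if the staleness factor inside $\gamma$ is $\bigl(1+(1/M)\textstyle\sum_{k=1}^{K}\bar d_k\bigr)$, which is exactly what the paper's supplementary proof substitutes, even though the theorem statement writes $\bigl(1+\textstyle\sum_{k=1}^{K}\bar d_k\bigr)$ --- an inconsistency in the paper rather than in your argument.
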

\begin{proof}
	See supplementary material C
\end{proof}

In summary, although the ADL attains model parallelism by adopting asynchronization, we show that our method can converge to critical points, and reveal how the convergence can be affected by the GA step $M$ and the split size $K$.

\section{Experiments}\label{section_ex}
In this section, we conduct classification tasks on the well-known CIFAR-10 \cite{krizhevsky2009learning} and ImageNet 2012 \cite{russakovsky2015imagenet} datasets to evaluate the classification and acceleration performance with various split sizes $K$. We compare our method with several state-of-the-arts, including DDG \cite{huo2018decoupled}, FR \cite{huo2018training}, DGL \cite{belilovsky2019decoupled}, Gpipe \cite{huang2018gpipe}, and DSP \cite{xu2020acceleration}, as well as BP \cite{werbos1974beyond}.
\begin{table}
	\centering
	\caption{Testing errors of the compared methods on (a) CIFAR-10 and (b) ImageNet (Top1/Top5).}
	\label{table_c10}
	\resizebox{1\linewidth}{!}{%
		{\large \bf (a)  }
		\begin{tabular}{lllccccc}
			\toprule[0.3mm]
			&\multicolumn{1}{c}{Architecture}&\multicolumn{1}{c}{\text{BP}} &\text{DDG}  &\text{DGL}  & \text{FR} &\text{DSP} &ADL\\ 
			\hline
			\multirow{7}{*}{$K\le 4$}
			&ResNet-56 ($K=2$)&6.19\% &6.63\%&6.77\%&\textbf{6.07}\%&-&\textbf{6.07}\%($M=2$)\\
			&ResNet-56 ($K=3$)&6.19\%&6.50\%&8.88\%&6.33\%&-&\textbf{6.09}\%($M=4$)\\
			&ResNet-56 ($K=4$)&6.19\%&6.61\%&9.65\%& 6.48\% &-&\textbf{6.16}\%($M=4$)\\
			&ResNet-18 ($K=2$)&4.87\%&5.00\%&5.21\%&\textbf{4.80}\%&-&4.82\%($M=2$)\\
			&ResNet-110 ($K=2$)&5.79\% &6.26\%&6.26\%&5.76\%&-&\textbf{5.70}\%($M=2$)\\
			&ResNet-98 ($K=4$)&6.01\%&-&-&-&6.59\%&\textbf{5.90}\%($M=3$)\\
			&ResNet-164 ($K=4$)&\textbf{5.36}\%&-&-&-&5.58\%&5.45\%($M=2$)\\
			\hline
			\multirow{4}{*}{$K> 4$}
			&ResNet-56 ($K=8$)&6.19\%&-&-&-&-&\textbf{6.18}\%($M=4$)\\
			&ResNet-18 ($K=8$)&\textbf{4.87}\%&-&-&-&-&4.92\%($M=4$)\\
			&ResNet-110 ($K=8$)&\textbf{5.79}\%&-&-&-&-&5.80\%($M=4$)\\
			&ResNet-164 ($K=10$)&\textbf{5.36}\%&-&-&-&-&5.52\%($M=2$)\\
			\hline
			\toprule[0.3mm]
	\end{tabular} }
	\resizebox{1\linewidth}{!}{%
		{\large \bf (b)  }
		\begin{tabular}{clccccccc}
			\toprule[0.3mm]
			&\multicolumn{1}{c}{Architecture}&\multicolumn{1}{c}{\text{BP}}&FR&DSP&\multicolumn{1}{c}{ADL}\\
			\hline
			\multirow{4}{*}{$K\le 4$}
			&ResNet-18 ($K=3$) & 29.79\%/10.92\%&31.16\%/-&31.15\%/-&\textbf{29.51}\%($M=2$)/\textbf{10.41}\%($M=2$)\\
			&ResNet-18 ($K=4$)& 29.79\%/10.92\%&-/-&-/-&\textbf{29.64}\%($M=4$)/\textbf{10.56}\%($M=4$)\\
			&ResNet-50 ($K=3$) & \textbf{23.65}\%/{7.13\%}&25.53\%/-&25.09\%/-&{23.92}\%($M=2$)/\textbf{7.07}\%($M=2$)\\
			&ResNet-50 ($K=4$) & 23.65\%/\textbf{7.13\%}&-/-&-/-&\textbf{23.37}\%($M=8$)/7.44\%($M=8$)\\
			\hline
			\multirow{4}{*}{$K> 4$}
			&ResNet-18 ($K=8$) & 29.79\%/10.92\%&-/-&-/-&\textbf{29.75}\%($M=4$)/\textbf{10.55}\%($M=4$)\\
			&ResNet-18 ($K=10$, max.) & \textbf{29.79}\%/10.92\%&-/-&-/-&29.84\%($M=4$)/\textbf{10.76}\%($M=4$)\\
			&SE-ResNet-18 ($K=8$) & 29.09\%/\textbf{9.89}\%&-/-&-/-&\textbf{29.01}\%($M=4$)/10.14\%($M=4$)\\
			&SE-ResNet-18 ($K=10$, max.) & 29.09\%/\textbf{9.89}\%&-/-&-/-&\textbf{29.07}\%($M=2$)/10.31\%($M=2$)\\
			\hline
			\toprule[0.3mm]
	\end{tabular} }
\end{table}

\begin{figure*}
	\centering
	\includegraphics[width=0.95  \linewidth]{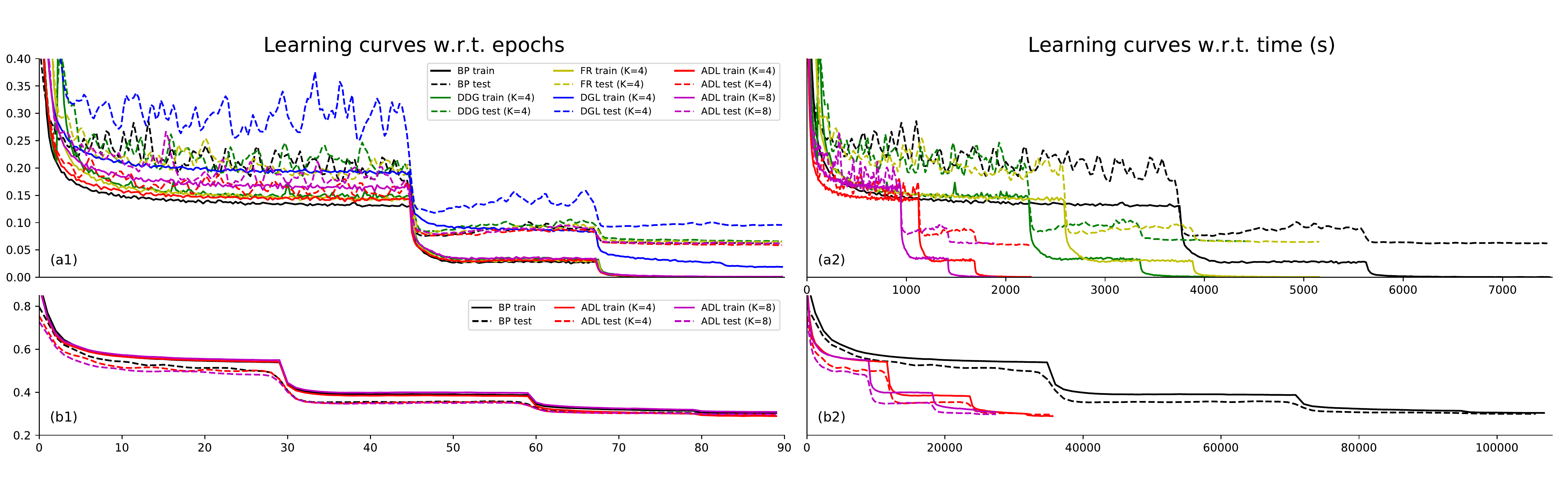}
	\caption{Learning curves (errors rates) for training (a) ResNet-56 on CIFAR-10, and (b) ResNet-18 on ImageNet.}
	\label{fig:lcs}
\end{figure*}
\noindent\textbf{Implementation Details}: The experiments are performed with Pytorch \cite{NEURIPS2019_9015} where we pre-process the datasets using standard data augmentation (i.e., random cropping, random horizontal flip and normalizing \cite{he2016deep}). The SGD optimizer with a momentum of 0.9 is adopted, and the models are trained using a batch size $b=32$. For a GA step of $M$, the initial learning rate is set at $0.1\times bM/256$. In addition, we adopt the gradual warm-up in \cite{goyal2017accurate} for 3 epochs. The testing errors of all the experiments are reported at the \textit{last epoch} by the median of 3 runs. For CIFAR-10, the weight decay is set at $5\times 10^{-4}$, and the models are trained for 300 epochs with the learning rate divided by 10 at 150, 225 and 275 epochs. For ImageNet, a 224$\times$×224 crop is randomly sampled, and the weight decay is set at $1\times 10^{-4}$. We train the networks for 90 epochs, and divide the learning rate by 10 at 30, 60, and 80 epochs.

\subsection{Generalization Performance}
\subsubsection{CIFAR-10} 
The CIFAR-10 dataset includes 32x32 color images with 10 classes, and has 50000 and 10000 samples for training and testing respectively. We train several architectures from ResNet \cite{he2016deep} and the classification results are shown in Table \ref{table_c10}(a). For a small split size (i.e., $K\le4$), in general the proposed ADL outperforms the compared methods, including the global BP. On the other hand, the ADL can push the split size from $K=4$ (maximum split size reported in the prior arts) up to $K=10$ while maintaining a comparable classification performance to the BP baseline. In particular, training ResNet-56 by splitting it into 8 modules even slightly outperforms the global BP.

\subsubsection{ImageNet}
The ImageNet dataset contains 1000 classes, and includes 1.28 million and 50000 images of various sizes for training and testing. We train several architectures from ResNet \cite{he2016deep} and SENet \cite{hu2018squeeze}, and report both Top1 and Top5 error rates. As shown in Table \ref{table_c10}(b), in general the proposed ADL outperforms (according to the Top1 results) the compared methods with either small ($K\le 4$) or large split size ($K>4$). We show that the ADL can maximally split the networks (e.g., ResNet-18 and SE-ResNet18) into 10 modules---with each module containing only one layer or one residual block---to facilitate model parallelism without compromising the generalization. These results are more promising than those from the CIFAR-10 experiments. It might be because that training on ImageNet is less sensitive to changes of batch size or batch normalization, which is evidenced by various distributed learning methods \cite{goyal2017accurate}.

To observe the convergence details, we also depict several examples of learning curves to show the training dynamic of the compared methods. Fig. \ref{fig:lcs}(a1) and \ref{fig:lcs}(b1) give the learning curves w.r.t. epochs, which show that the ADL converges smoothly in a similar way to the BP's. Fig. \ref{fig:lcs}(a2) and \ref{fig:lcs}(b2) show the learning curves w.r.t. wall time. The proposed ADL achieves the highest speedup among the compared methods, which is expected as the proposed method is a lock-free decoupled learning method unlike the DDG and FR that only tackle the backward locking.
\subsubsection{A Simple Ablation Study}
Here we conduct a very simple ablation study to show the significance of the GA technique to the proposed ADL. We train networks on CIFAR-10 using the ADL with ($M>1$) and without ($M=1$) the GA technique. As indicated in Table \ref{table_ab}, networks trained without the GA technique tend to give worse performance or even diverge due to strong gradient staleness, while the ADL with GA could give comparable results to the BP baselines. This simple study provides an empirical evidence for the necessity to include the GA technique in the ADL.

\begin{table}
	\centering
	\caption{Results for ablation study.}
	\label{table_ab}
	
	\resizebox{0.8\linewidth}{!}{%
		\begin{tabular}{cccc}
			\toprule[0.3mm]
			Architecture&BP&ADL with GA&ADL without GA\\ 
			\hline
			ResNet-18 ($K=8$) &4.87\%&4.92\%($M=4$)&5.50\%($M=1$)\\
			ResNet-56 ($K=8$) &6.19\%&6.18\%($M=2$)&div. ($M=1$)\\
			\hline
			\toprule[0.3mm]
	\end{tabular} }
\end{table}

\subsection{Acceleration Performance}
We show the acceleration performance of the proposed ADL on a server with Tesla V100 GPUs through training ResNet-101 on ImageNet, and ResNet-1202 on CIFAR-10 with $K=4,8$. Batch size is adjusted to maximize the training speed, and the network split locations are tuned to distribute the workload as evenly as possible. 

As shown in Table \ref{table_acceleration}, the proposed ADL achieves the best acceleration in the learning examples due to fully addressing the locking problem in BP. For training ResNet-101 ($K=4$) on ImageNet, the ADL is able to give a \textbf{3.32}$\times$ speedup, which is the fastest among methods of decoupled learning. We notice that the acceleration is not well-delivered after scaling the split size to $8$, which only achieves a speedup of \textbf{3.95}$\times$. This is due to the imbalanced workload allocation among different modules. It is an inevitable issue in methods that require a depth-wise partition without a custom design to evenly distribute the computation into each worker. The imbalance becomes more observable if larger $K$ is considered. Such imbalance issue can be verified in the ResNet-1202 example in Table \ref{table_acceleration}, where the acceleration is relatively more significant (e.g., \textbf{3.60}$\times$ and \textbf{6.30}$\times$ for $K=4,8$ respectively). The improvement over the ResNet-101 case is because ResNet-1202 has more layers, and hence easily leads to a more balanced workload partition. 

\begin{table}
	\caption{Examples of speedups (over BP) in training ResNet-101 (ImageNet) and ResNet1202 (CIFAR-10).}
	\label{table_acceleration}
	\centering
	
	\resizebox{1\linewidth}{!}{%
		\begin{tabular}{lccccccc}
			\toprule[0.3mm]
			\hline
			&BP&DDG&FR&Gpipe&DSP&ADL\\
			\hline
			ResNet-101 (K=4)&1$\times$&1.68$\times$&1.45$\times$&2.20$\times$&2.70$\times$&\textbf{3.32}$\times$\\
			ResNet-101 (K=8)&1$\times$&-&-&3.00$\times$&-&\textbf{3.95}$\times$\\
			ResNet-1202 (K=4)&1$\times$&-&-&-&-&\textbf{3.60}$\times$\\
			ResNet-1202 (K=8)&1$\times$&-&-&-&-&\textbf{6.30}$\times$\\
			\hline
			\bottomrule[0.3mm]
	\end{tabular} }
\end{table}

In summary, the proposed ADL gives comparable or better results in classification tasks for $K\le 4$ compared with various methods, and is shown to perform robustly and accurately for $K>4$ (up to $K=10$) where the current state-of-the-arts cannot reach. As a model parallelism tool, the ADL is the fastest among the compared methods.

\section{Conclusion}
In this paper, we proposed the accumulated decoupled learning (ADL) to address the inefficient BP lockings thereby achieving model parallelism. The proposed method incorporates the gradient accumulation technique, which mitigates the stale gradient effect that hinders the scaling ability of the decoupled learning. The mitigation has been demonstrated theoretically, and also evidenced empirically through the ablation study. Our convergence analysis has shown that the ADL can converge to critical points, i.e., the gradients converge to 0. The classification tasks conducted showed that the proposed ADL in general outperformed the state-of-the-art counterparts in terms of both accuracy and training acceleration.

\bibliographystyle{IEEEtran}
\bibliography{DG}

\newpage
\onecolumn
\section*{Supplementary material A: Proof of Theorem \ref{thm_convergence}}
\begin{proof}
	To simplify the notations, let $\bm{\mathfrak{g}}_{\bm{\theta}_{q(k)}}^{U_{s}^{'}} =$$ \frac{1}{M}\sum_{j=0}^{M-1}\bm{g}_{\bm{\theta}_{q(k)}}^{U_{s}+j-2(K-k)}$ and $\bm{\bar \mathfrak{g}}_{\bm{\theta}_{q(k)}}^{U_{s}^{'}} =$$ \frac{1}{M}\sum_{j=0}^{M-1}\bm{\bar g}_{\bm{\theta}_{q(k)}}^{U_{s}+j-2(K-k)}$.
	According to Assumption 1, the following inequality holds:
	\begin{align}\nonumber
	f(\bm{\theta}^{U_{s+1}}) \le& f(\bm{\theta}^{U_{s}}) + (\bm{\bar g}_{\bm{\theta}}^{U_{s}})^{T}(\bm{\theta}^{U_{s+1}} - \bm{\theta}^{U_{s}}) + \frac{L}{2}\Big\lVert\bm{\theta}^{U_{s+1}} - \bm{\theta}^{U_{s}}\Big\lVert_2^2\\\label{eq_continous}
	=&f(\bm{\theta}^{U_{s}}) - \gamma_{s}\sum\limits_{k=1}^{K}(\bm{\bar g}_{\bm{\theta}_{q(k)}}^{U_{s}})^{T}\bm{\mathfrak{g}}_{\bm{\theta}_{q(k)}}^{U_{s}^{'}} + \frac{L\gamma_{s}^{2}}{2}\sum\limits_{k=1}^{K}\Big\lVert\bm{\mathfrak{g}}_{\bm{\theta}_{q(k)}}^{U_{s}^{'}}\Big\lVert_2^{2}
	\end{align}
	which can be further developed such that
	\begin{align}\nonumber
	f(\bm{\theta}^{U_{s+1}}) &\le f(\bm{\theta}^{U_{s}})- \gamma_{s}\sum\limits_{k=1}^{K}(\bm{\bar g}_{\bm{\theta}_{q(k)}}^{U_{s}})^{T}(\bm{\mathfrak{g}}_{\bm{\theta}_{q(k)}}^{U_{s}^{'}} - \bm{\bar g}_{\bm{\theta}_{q(k)}}^{U_{s}} + \bm{\bar g}_{\bm{\theta}_{q(k)}}^{U_{s}})+ \frac{L\gamma_{s}^{2}}{2}\sum\limits_{k=1}^{K}\Big\lVert\bm{\mathfrak{g}}_{\bm{\theta}_{q(k)}}^{U_{s}^{'}} - \bm{\bar g}_{\bm{\theta}_{q(k)}}^{U_{s}} + \bm{\bar g}_{\bm{\theta}_{q(k)}}^{U_{s}}\Big\lVert_2^{2}\\\nonumber
	&=f(\bm{\theta}^{U_{s}}) - \gamma_{s}\sum\limits_{k=1}^{K}\Big\lVert \bm{\bar g}_{\bm{\theta}_{q(k)}}^{U_{s}}\Big\lVert_2^{2} -\gamma_{s}\sum\limits_{k=1}^{K}(\bm{\bar g}_{\bm{\theta}_{q(k)}}^{U_{s}})^{T}(\bm{\mathfrak{g}}_{\bm{\theta}_{q(k)}}^{U_{s}^{'}} - \bm{\bar g}_{\bm{\theta}_{q(k)}}^{U_{s}}) + \frac{L\gamma_{s}^{2}}{2}\sum\limits_{k=1}^{K}\Big\lVert \bm{\bar g}_{\bm{\theta}_{q(k)}}^{U_{s}}\Big\lVert_2^{2}\\\nonumber
	& + \frac{L\gamma_{s}^{2}}{2}\sum\limits_{k=1}^{K}\Big\lVert\bm{\mathfrak{g}}_{\bm{\theta}_{q(k)}}^{U_{s}^{'}} - \bm{\bar g}_{\bm{\theta}_{q(k)}}^{U_{s}} \Big\lVert_2^{2} + L\gamma_{s}^{2}\sum\limits_{k=1}^{K}(\bm{\bar g}_{\bm{\theta}_{q(k)}}^{U_{s}})^{T}(\bm{\mathfrak{g}}_{\bm{\theta}_{q(k)}}^{U_{s}^{'}} - \bm{\bar g}_{\bm{\theta}_{q(k)}}^{U_{s}})\\\label{eq_appendix_3}
	&= f(\bm{\theta}^{U_{s}}) - (\gamma_{s} - \frac{L\gamma_{s}^{2}}{2})\sum\limits_{k=1}^{K}\Big\lVert\bm{\bar g}_{\bm{\theta}_{q(k)}}^{U_{s}}\Big\lVert_2^2 + \tilde{Q}_1 + \tilde{Q}_2
	\end{align}
	where
	\begin{align*}
	\tilde{Q}_{1} =  \frac{L\gamma_{s}^{2}}{2}\sum\limits_{k=1}^{K}\Big\lVert\bm{\mathfrak{g}}_{\bm{\theta}_{q(k)}}^{U_{s}^{'}} - \bm{\bar g}_{\bm{\theta}_{q(k)}}^{U_{s}} \Big\lVert_2^{2}, \  \tilde{Q}_{2} &= (L\gamma_{s}^{2} - \gamma_{s})\sum\limits_{k=1}^{K}(\bm{\bar g}_{\bm{\theta}_{q(k)}}^{U_{s}})^{T}(\bm{\mathfrak{g}}_{\bm{\theta}_{q(k)}}^{U_{s}^{'}} - \bm{\bar g}_{\bm{\theta}_{q(k)}}^{U_{s}}).
	\end{align*}
	The expectation of $\tilde{Q}_1$ is bounded by
	
	\begin{align*}
	\mathbb{E}_{\bm{x}}\{\tilde{Q}_1\} =& \frac{L\gamma_{s}^{2}}{2}\mathbb{E}_{\bm{x}}\{\sum\limits_{k=1}^{K}\Big\lVert\bm{\mathfrak{g}}_{\bm{\theta}_{q(k)}}^{U_{s}^{'}} - \bm{\bar g}_{\bm{\theta}_{q(k)}}^{U_{s}}\Big\lVert_{2}^{2}\} = \frac{L\gamma_{s}^{2}}{2}\mathbb{E}_{\bm{x}}\{\sum\limits_{k=1}^{K}\Big\lVert\bm{\mathfrak{g}}_{\bm{\theta}_{q(k)}}^{U_{s}^{'}} -\bm{\bar \mathfrak{g}}_{\bm{\theta}_{q(k)}}^{U_{s}^{'}} - \bm{\bar g}_{\bm{\theta}_{q(k)}}^{U_{s}} + \bm{\bar \mathfrak{g}}_{\bm{\theta}_{q(k)}}^{U_{s}^{'}} \Big\lVert_{2}^{2}\}\\
	\le & L\gamma_{s}^{2}\mathbb{E}_{\bm{x}}\{\sum\limits_{k=1}^{K}\Big\lVert\bm{\mathfrak{g}}_{\bm{\theta}_{q(k)}}^{U_{s}^{'}} -\bm{\bar \mathfrak{g}}_{\bm{\theta}_{q(k)}}^{U_{s}^{'}}\Big\lVert_{2}^{2}\} +   L\gamma_{s}^{2}\sum\limits_{k=1}^{K}\Big\lVert\bm{\bar \mathfrak{g}}_{\bm{\theta}_{q(k)}}^{U_{s}^{'}} - \bm{\bar g}_{\bm{\theta}_{q(k)}}^{U_{s}}\Big\lVert_{2}^{2}\\
	=& L\gamma_{s}^{2}\mathbb{E}_{\bm{x}}\{\Big\lVert\bm{\mathfrak{g}}_{\bm{\theta}}^{U_{s}^{'}} -\bm{\bar \mathfrak{g}}_{\bm{\theta}}^{U_{s}^{'}}\Big\lVert_{2}^{2}\} +   L\gamma_{s}^{2}\sum\limits_{k=1}^{K}\Big\lVert\bm{\bar \mathfrak{g}}_{\bm{\theta}_{q(k)}}^{U_{s}^{'}} - \bm{\bar g}_{\bm{\theta}_{q(k)}}^{U_{s}}\Big\lVert_{2}^{2}\\
	\le & L\gamma_{s}^{2}\mathbb{E}_{\bm{x}}\{\Big\lVert\bm{\mathfrak{g}}_{\bm{\theta}}^{U_{s}^{'}}\Big\lVert_{2}^{2}\} +   L\gamma_{s}^{2}\sum\limits_{k=1}^{K}\Big\lVert\bm{\bar \mathfrak{g}}_{\bm{\theta}_{q(k)}}^{U_{s}^{'}} - \bm{\bar g}_{\bm{\theta}_{q(k)}}^{U_{s}}\Big\lVert_{2}^{2}\\
	\le & L\gamma_{s}^{2}\frac{1}{M^{2}}\mathbb{E}_{\bm{x}}\{\sum_{j=0}^{M-1}\Big\lVert\bm{\bar g}_{\bm{\theta}_{q(k)}}^{U_{s}+j-2(K-k)}\Big\lVert_{2}^{2}\} +   L\gamma_{s}^{2}\sum\limits_{k=1}^{K}\Big\lVert\bm{\bar \mathfrak{g}}_{\bm{\theta}_{q(k)}}^{U_{s}^{'}} - \bm{\bar g}_{\bm{\theta}_{q(k)}}^{U_{s}}\Big\lVert_{2}^{2}\\
	\le & \frac{AL}{M}\gamma_{s}^{2} + L\gamma_{s}^{2}\sum\limits_{k=1}^{K}\Big\lVert\bm{\bar \mathfrak{g}}_{\bm{\theta}_{q(k)}}^{U_{s}^{'}} - \bm{\bar g}_{\bm{\theta}_{q(k)}}^{U_{s}}\Big\lVert_{2}^{2} = \frac{AL}{M}\gamma_{s}^{2} + L\gamma_{s}^{2}\tilde{P}_1
	\end{align*}
	where the first inequality follows from $\lVert\bm{x} + \bm{y}\lVert_{2}^{2}\le 2\lVert\bm{x}\lVert_2^2 + 2\lVert\bm{y}\lVert_2^2$. The second inequality is from $\mathbb{E}\{\lVert\epsilon - \mathbb{E}\{\epsilon\}\lVert_{2}^{2}\}\le\mathbb{E}\{\lVert\epsilon\lVert_{2}^{2}\} -  \lVert\mathbb{E}\{\epsilon\}\lVert_{2}^{2}\le \mathbb{E}\{\lVert\epsilon\lVert_{2}^{2}\}$ due to gradient unbiasedness (i.e., $\mathbb{E}_{\bm{x}}\{\bm{\mathfrak{g}}_{\bm{\theta}_{q(k)}}^{U_{s}^{'}}\} = \bm{\bar \mathfrak{g}}_{\bm{\theta}_{q(k)}}^{U_{s}^{'}}$). The last inequality follows from Assumption 2, and $\tilde{P}_1$ is bounded by 
	\begin{align*}
	\tilde{P}_1 =& \sum\limits_{k=1}^{K}\Big\lVert\bm{\bar \mathfrak{g}}_{\bm{\theta}_{q(k)}}^{U_{s}^{'}} - \bm{\bar g}_{\bm{\theta}_{q(k)}}^{U_{s}}\Big\lVert_{2}^{2}\le  \frac{1}{M^{2}}\sum\limits_{k=1}^{K}\sum\limits_{j=0}^{M-1}\Big\lVert\bm{\bar g}_{\bm{\theta}_{q(k)}}^{U_{s}+j-2(K-k)} - \bm{\bar g}_{\bm{\theta}_{q(k)}}^{U_{s}}\Big\lVert_{2}^{2}\\
	=& \frac{L^{2}}{M^{2}}\sum\limits_{k=1}^{K}\sum\limits_{j=0}^{M-1}\Big\lVert\bm{\theta}_{q(k)}^{U_{s}} - \bm{\theta}_{q(k)}^{U_{s-d_{k,j}}}\Big\lVert_{2}^{2}\\
	=& \frac{L^{2}}{M^{2}}\sum\limits_{k=1}^{K}\sum\limits_{j=0}^{M-1}\Big\lVert\sum\limits_{\alpha=\mathrm{max}\{0,s-d_{k,j}\}}^{s-1}(\bm{\theta}_{q(k)}^{U_{\alpha+1}} - \bm{\theta}_{q(k)}^{U_{\alpha}})\Big\lVert_{2}^{2} \le  \frac{L^{2}}{M^{2}}\sum\limits_{k=1}^{K}\sum\limits_{j=0}^{M-1}\sum\limits_{\alpha=\mathrm{max}\{0,s-d_{k,j}\}}^{s-1}\Big\lVert\bm{\theta}_{q(k)}^{U_{\alpha+1}} - \bm{\theta}_{q(k)}^{U_{\alpha}}\Big\lVert_{2}^{2}\\
	= & \frac{L^{2}}{M^{2}}\sum\limits_{k=1}^{K}\sum\limits_{j=0}^{M-1}\sum\limits_{\alpha=\mathrm{max}\{0,s-d_{k,j}\}}^{s-1}\gamma_{\alpha}^{2}\Big\lVert\bm{\mathfrak{g}}_{\bm{\theta}_{q(k)}}^{U_{s}^{'}}\Big\lVert_{2}^{2} \le \frac{L^{2}}{M^{2}}\sum\limits_{k=1}^{K}\sum\limits_{j=0}^{M-1}\sum\limits_{\alpha=\mathrm{max}\{0,s-d_{k,j}\}}^{s-1}\gamma_{\alpha}^{2}\frac{1}{M^{2}}\sum_{j=0}^{M-1}\Big\lVert\bm{g}_{\bm{\theta}_{q(k)}}^{U_{s}+j-2(K-k)}\Big\lVert_{2}^{2}\\
	\le & \frac{AL^{2}}{M^{2}}\sum\limits_{k=1}^{K}\sum\limits_{j=0}^{M-1}\sum\limits_{\alpha=\mathrm{max}\{0,s-d_{k,j}\}}^{s-1}\gamma_{\alpha}^{2}
	\le  \gamma_{s}^{2}\frac{AL^{2}}{M^{2}}\sum\limits_{k=1}^{K}\sum\limits_{j=0}^{M-1}(s-\mathrm{max}\{0,s-d_{k,j}\}) \\
	\le & \gamma_{s}^{2}\frac{AL^{2}}{M^{3}}\sum\limits_{k=1}^{K}\sum\limits_{j=0}^{M-1}d_{k,j} = \gamma_{s}^{2}\frac{AL^{2}}{M^{2}}\sum\limits_{k=1}^{K}{\bar d}_{k}
	\end{align*}
	with the first inequality coming from Assumption 1. On the other hand, the expectation of $\tilde{Q}_2$ is bounded by
	
	\begin{align*}
	\mathbb{E}_{\bm{x}}\{\tilde{Q}_2 \}=&  -(\gamma_{s} - L\gamma_{s}^{2})\mathbb{E}_{\bm{x}}\{\sum\limits_{k=1}^{K}(\bm{\bar g}_{\bm{\theta}_{q(k)}}^{U_{s}})^{T}\Big(\bm{\mathfrak{g}}_{\bm{\theta}_{q(k)}}^{U_{s}^{'}} - \bm{\bar g}_{\bm{\theta}_{q(k)}}^{U_{s}}\Big)\} =-(\gamma_{s} - L\gamma_{s}^{2})\sum\limits_{k=1}^{K}(\bm{\bar g}_{\bm{\theta}_{q(k)}}^{U_{s}})^{T}\Big(\bm{\bar \mathfrak{g}}_{\bm{\theta}_{q(k)}}^{U_{s}^{'}} - \bm{\bar g}_{\bm{\theta}_{q(k)}}^{U_{s}}\Big)\\
	\le & \frac{\gamma_{s} - L\gamma_{s}^{2}}{2}\sum\limits_{k=1}^{K}\Big\lVert\bm{\bar g}_{\bm{\theta}_{q(k)}}^{U_{s}}\Big\lVert_{2}^{2} + \frac{\gamma_{s} - L\gamma_{s}^{2}}{2}\tilde{P}_1
	\end{align*}
	where the second equality follows by the unbiased gradient using SGD, and the inequality comes from $\pm \bm{x}^{T}\bm{y}\le \frac{1}{2}\lVert\bm{x}\lVert_2^2 + \frac{1}{2}\lVert\bm{y}\lVert_2^2$.
	
	Taking the expectation of both sides in \eqref{eq_appendix_3} and substituting $\tilde{Q}_1$ and $\tilde{Q}_2$, the inequality is rewritten as
	\begin{align}\nonumber
	\mathbb{E}_{\bm{x}}\{f(\bm{\theta}^{U_{s+1}})\} \le& f(\bm{\theta}^{U_{s}})- (\gamma_{s} - \frac{L\gamma_{s}^{2}}{2})\sum\limits_{k=1}^{K}\Big\lVert\bm{\bar g}_{\bm{\theta}_{q(k)}}^{U_{s}}\Big\lVert_2^2 + \frac{AL}{M}\gamma_{s}^{2} + L\gamma_{s}^{2}\tilde{P}_1+\frac{\gamma_{s} - L\gamma_{s}^{2}}{2}\sum\limits_{k=1}^{K}\Big\lVert\bm{\bar g}_{\bm{\theta}_{q(k)}}^{U_{s}}\Big\lVert_{2}^{2} + \frac{\gamma_{s} - L\gamma_{s}^{2}}{2}\tilde{P}_1\\\nonumber
	=&f(\bm{\theta}^{U_{s}})- \frac{\gamma_{s}}{2}\Big\lVert\bm{\bar g}_{\bm{\theta}}^{U_{s}}\Big\lVert_2^2 + \frac{\gamma_{s}+ L\gamma_{s}^{2}}{2}\tilde{P}_{1}+ \frac{AL}{M}\gamma_{s}^{2} \\\nonumber
	\le&f(\bm{\theta}^{U_{s}})- \frac{\gamma_{s}}{2}\Big\lVert\bm{\bar g}_{\bm{\theta}}^{U_{s}}\Big\lVert_2^2 + \frac{\gamma_{s}+ L\gamma_{s}^{2}}{2}\gamma_{s}^{2}\frac{AL^{2}}{M^{2}}\sum\limits_{k=1}^{K}{\bar d}_{k}+ \frac{AL}{M}\gamma_{s}^{2}\\\nonumber
	=&f(\bm{\theta}^{U_{s}}) - \frac{\gamma_{s}}{2}\Big\lVert\bm{\bar g}_{\bm{\theta}}^{U_{s}}\Big\lVert_2^2+ \gamma_{s}^{2}\Big( \frac{AL}{M} + \frac{\gamma_{s}+ L\gamma_{s}^{2}}{2}L\frac{AL}{M^{2}}\sum\limits_{k=1}^{K}{\bar d}_{k}\Big)\\\label{eq_ap_}
	\le&f(\bm{\theta}^{U_{s}}) - \frac{\gamma_{s}}{2}\Big\lVert\bm{\bar g}_{\bm{\theta}}^{U_{s}}\Big\lVert_2^2+ \gamma_{s}^{2}\frac{AL}{M}\Big( 1+ \frac{1}{M}\sum\limits_{k=1}^{K}{\bar d}_{k}\Big)
	\end{align}
	where the last inequality follows from $L\gamma_{s}\le 1$ such that $\frac{\gamma_{s}+ L\gamma_{s}^{2}}{2}L = \frac{1}{2}(L\gamma_{s} + (L\gamma_{s})^{2})\le 1$. The proof is now completed.
\end{proof}	

\section*{Supplementary material B: Proof of Theorem \ref{thm_avg_gra}}
\begin{proof}
	By moving $\frac{\gamma_{s}}{2}\Big\lVert\bm{\bar g}_{\bm{\theta}}^{U_{s}}\Big\lVert_2^2$ and $\mathbb{E}_{\bm{x}}\{f(\bm{\theta}^{U_{s+1}})\}$ to the LHS and the RHS of \eqref{eq_convergence} respectively, and multiplying both sides by $2$, we have
	\begin{align}\label{eq_thm_weighted_avg}
	\gamma_{s}\Big\lVert\bm{\bar g}_{\bm{\theta}}^{U_{s}}\Big\lVert_2^2 \le 2(f(\bm{\theta}^{U_{s}}) - \mathbb{E}_{\bm{x}}\{f(\bm{\theta}^{U_{s+1}})\}) + 2\gamma_{s}^{2}\frac{AL}{M}\Big( 1+ \frac{1}{M}\sum\limits_{k=1}^{K}{\bar d}_{k}\Big).
	\end{align} 
	Take full expectation on both sides of \eqref{eq_thm_weighted_avg}, and it leads to
	\begin{align}\label{eq_thm_weighted_avg2}
	\gamma_{s}\mathbb{E}\{\Big\lVert\bm{\bar g}_{\bm{\theta}}^{U_{s}}\Big\lVert_2^2\} \le 2(\mathbb{E}\{f(\bm{\theta}^{U_{s}})\} - \mathbb{E}\{f(\bm{\theta}^{U_{s+1}})\}) + 2\gamma_{s}^{2}\frac{AL}{M}\Big( 1+ \frac{1}{M}\sum\limits_{k=1}^{K}{\bar d}_{k}\Big).
	\end{align} 
	By summing both sides of \eqref{eq_thm_weighted_avg2} from $0$ to $S-1$, and dividing it by $\mathbb{T}_{S} = \sum_{s=0}^{S-1}\gamma_{s}$, it becomes
	\begin{align*}
	\frac{1}{\mathbb{T}_{S}}\sum_{s=0}^{S-1}\gamma_{s}\mathbb{E}\{\Big\lVert\bm{\bar g}_{\bm{\theta}}^{U_{s}}\Big\lVert_2^2\} &\le \frac{2(f(\bm{\theta}^{0}) - \mathbb{E}\{f(\bm{\theta}^{U_{S}})\})}{\mathbb{T}_{S}} + \frac{2\frac{AL}{M}\Big( 1+ \frac{1}{M}\textstyle\sum_{k=1}^{K}{\bar d}_{k}\Big)\sum_{s=0}^{S-1}\gamma_{s}^{2} }{\mathbb{T}_{S}}\\
	&\le \frac{2(f(\bm{\theta}^{0}) - f(\bm{\theta}^{*}))}{\mathbb{T}_{S}} + \frac{2\frac{AL}{M}\Big( 1+ \frac{1}{M}\textstyle\sum_{k=1}^{K}{\bar d}_{k}\Big)\sum_{s=0}^{S-1}\gamma_{s}^{2} }{\mathbb{T}_{S}}.
	\end{align*}
	where the last inequality comes from $ f(\bm{\theta}^{*}) \le \mathbb{E}\{f(\bm{\theta}^{U_{S}})\}$.
\end{proof}
\section*{Supplementary material C: Proof of Theorem \ref{thm_constant_lr}}
\begin{proof}
	We start the proof from \eqref{eq_thm_weighted_avg2} as the constant learning rate is a special case in Theorem \ref{thm_avg_gra}. By setting $\gamma_{s}=\gamma$, Eq. \eqref{eq_thm_weighted_avg2} is rewritten as
	\begin{align}\label{eq_fix_lr1}
	\mathbb{E}\{\Big\lVert\bm{\bar g}_{\bm{\theta}}^{U_{s}}\Big\lVert_2^2\}\le \frac{2(\mathbb{E}\{f(\bm{\theta}^{U_{s}})\} - \mathbb{E}\{f(\bm{\theta}^{U_{s+1}})\})}{\gamma} + 2\gamma \frac{AL}{M}\Big( 1+ \frac{1}{M}\sum\limits_{k=1}^{K}{\bar d}_{k}\Big).
	\end{align}
	Summing both sides of \eqref{eq_fix_lr1} from $s=0$ to $S-1$ and dividing them by $S$, it leads to
	\begin{align}\nonumber
	\frac{1}{S}\sum\limits_{s=0}^{S-1}\mathbb{E}\{\Big\lVert\bm{\bar g}_{\bm{\theta}}^{U_{s}}\Big\lVert_2^2\} &\le\frac{2(f(\bm{\theta}^{0}) - \mathbb{E}\{f(\bm{\theta}^{U_{S}})\})}{\gamma S} + 2\gamma \frac{AL}{M}\Big( 1+ \frac{1}{M}\sum\limits_{k=1}^{K}{\bar d}_{k}\Big)\\\label{eq_constant_avg_gra}
	&\le \frac{2(f(\bm{\theta}^{0}) - f(\bm{\theta}^{*}))}{\gamma S} + 2\gamma \frac{AL}{M}\Big( 1+ \frac{1}{M}\sum\limits_{k=1}^{K}{\bar d}_{k}\Big).
	\end{align}
	Substituting $\gamma =\epsilon\sqrt{M(f(\bm{\theta}^{0}) - f(\bm{\theta}^{*}))/\Big({SAL}(1+ (1/M){\textstyle\sum}_{k=1}^{K}{\bar d}_{k})}\Big)$ into \eqref{eq_constant_avg_gra}, the RHS becomes
	\begin{align*}
	\frac{2(f(\bm{\theta}^{0}) - f(\bm{\theta}^{*})) + 2\gamma^{2}SAL\Big( 1 + (1/M)\textstyle\sum_{k=1}^{K}{\bar d}_{k}\Big)/M}{\gamma S} &= \frac{(2 + 2\epsilon^{2})(f(\bm{\theta}^{0}) - f(\bm{\theta}^{*}))}{ S\epsilon\sqrt{M(f(\bm{\theta}^{0}) - f(\bm{\theta}^{*}))/\Big({SAL}(1+ (1/M){\textstyle\sum}_{k=1}^{K}{\bar d}_{k})}\Big)}\\
	&= \frac{(2 + 2\epsilon^{2})}{\epsilon}\sqrt{AL(f(\bm{\theta}^{0}) - f(\bm{\theta}^{*}))\Big( 1 + (1/M)\textstyle\sum_{k=1}^{K}{\bar d}_{k}\Big)/(MS)}.
	\end{align*}
	Since the LHS of \eqref{eq_constant_avg_gra} is the average of $\mathbb{E}\{\Big\lVert\bm{\bar g}_{\bm{\theta}}^{U_{s}}\Big\lVert_2^2\}$ for $s=0,1,\dots,S-1$, we have
	\begin{align*}
	\underset{t\in\{0,1,\dots,S-1\}}{\mathrm{min}}\mathbb{E}\{\Big\lVert\bm{\bar g}_{\bm{\theta}}^{U_{s}}\Big\lVert_2^2\} \le \frac{(2 + 2\epsilon^{2})}{\epsilon}\sqrt{AL(f(\bm{\theta}^{0}) - f(\bm{\theta}^{*}))\Big( 1 + (1/M)\textstyle\sum_{k=1}^{K}{\bar d}_{k}\Big)/(MS)}
	\end{align*}
	which completes the proof.
\end{proof}

\end{document}